\documentclass[10pt]{article} % For LaTeX2e
%\usepackage{tmlr}
% If accepted, instead use the following line for the camera-ready submission:
%\usepackage[accepted]{tmlr}
% To de-anonymize and remove mentions to TMLR (for example for posting to preprint servers), instead use the following:
\usepackage[preprint]{tmlr}

% Optional math commands from https://github.com/goodfeli/dlbook_notation.
%%%%% NEW MATH DEFINITIONS %%%%%

\usepackage{amsmath,amsfonts,bm}

% Mark sections of captions for referring to divisions of figures

% Highlight a newly defined term

% Figure reference, lower-case.

% Figure reference, capital. For start of sentence

% Section reference, lower-case.

% Section reference, capital.

% Reference to two sections.

% Reference to three sections.

% Reference to an equation, lower-case.
\def\eqref#1{equation~\ref{#1}}
% Reference to an equation, upper case

% A raw reference to an equation---avoid using if possible

% Reference to a chapter, lower-case.

% Reference to an equation, upper case.

% Reference to a range of chapters

% Reference to an algorithm, lower-case.

% Reference to an algorithm, upper case.

% Reference to a part, lower case

% Reference to a part, upper case

\def\1{\bm{1}}

% Random variables

% rm is already a command, just don't name any random variables m

% Random vectors

% Elements of random vectors

% Random matrices

% Elements of random matrices

% Vectors

% Elements of vectors

% Matrix

% Tensor
\DeclareMathAlphabet{\mathsfit}{\encodingdefault}{\sfdefault}{m}{sl}
\SetMathAlphabet{\mathsfit}{bold}{\encodingdefault}{\sfdefault}{bx}{n}

% Graph

% Sets

% Don't use a set called E, because this would be the same as our symbol
% for expectation.

% Entries of a matrix

% entries of a tensor
% Same font as tensor, without \bm wrapper

% The true underlying data generating distribution

% The empirical distribution defined by the training set

% The model distribution

% Stochastic autoencoder distributions

 % Laplace distribution

% Wolfram Mathworld says $L^2$ is for function spaces and $\ell^2$ is for vectors
% But then they seem to use $L^2$ for vectors throughout the site, and so does
% wikipedia.

 % See usage in notation.tex. Chosen to match Daphne's book.

\usepackage{hyperref}
\usepackage{url}
\usepackage{amsthm}
\usepackage{algorithm}

% save the meaning of \AND and undefine it to keep algorithmic happy
\let\classAND\AND
\let\AND\relax
% load algorithmic
\usepackage{algorithmic}
% save the new meaning of \AND and restore the one of the class

\let\AND\classAND
% but when we start \begin{algorithmic} we want its own \AND
\AtBeginEnvironment{algorithmic}{\let\AND\algoAND}

\usepackage{graphicx}

\title{G-TRACER: Expected Sharpness Optimization}

% Authors must not appear in the submitted version. They should be hidden
% as long as the tmlr package is used without the [accepted] or [preprint] options.
% Non-anonymous submissions will be rejected without review.

\author{\name John Williams \email john.williams@eng.ox.ac.uk \\
      \addr Machine Learning Research Group\\
      University of Oxford
      \AND
      \name Stephen Roberts \email sjrob@robots.ox.ac.uk \\
      \addr Machine Learning Research Group\\
      University of Oxford}

% The \author macro works with any number of authors. Use \AND 
% to separate the names and addresses of multiple authors.

\newtheorem{theorem}{Theorem}
\newtheorem{prop}{Proposition}
\newtheorem{lemma}{Lemma}

  % Insert correct month for camera-ready version
 % Insert correct year for camera-ready version
 % Insert correct link to OpenReview for camera-ready version

\begin{document}

\maketitle

\begin{abstract}
We propose a new regularization scheme for the optimization of deep learning architectures, G-TRACER ("Geometric TRACE Ratio"), which promotes generalization by seeking flat minima, and has a sound theoretical basis as an approximation to a natural-gradient descent based optimization of a generalized Bayes objective.  By augmenting the loss function with a TRACER, curvature-regularized optimizers (eg SGD-TRACER and Adam-TRACER) are simple to implement as modifications to existing optimizers and don't require extensive tuning.  We show that the method converges to a neighborhood (depending on the regularization strength) of a local minimum of the unregularized objective, and demonstrate competitive performance on a number of benchmark computer vision and NLP datasets, with a particular focus on challenging low signal-to-noise ratio problems.
\end{abstract}

\section{Introduction}

\subsection{Problem setting}

The connection between generalization performance and the loss-surface geometry of deep-learning architectures in the neighborhood of local minima has long been the subject of interest and speculation, dating back to the MDL-based arguments of \citep{hinton_van_camp} and \citep{hochreiter_mdl}.  The connection is an intuitively appealing one, in that the sharp local minima of the highly nonlinear, non-convex optimization problems associated with modern overparameterized deep learning architectures are more likely to be brittle and sensitive to perturbations in the parameters and training data, and thus lead to worse performance on unseen data. We can build some intuition for this from a probabilistic modelling perspective, given a dataset $\mathcal{D} = \{(x_i, y_i)_{i=1}^n\}  $ consisting of $n$ independent input random variables $x_i$ with distribution $p(x)$ and corresponding targets (or labels) $y_i$ with distribution $p(y|x)$ and treating the parameters $w \in \Theta \subseteq \mathbb{R}^p$ of a deep neural network (DNN) $f(\cdot,w): \mathbb{R}^{d_x} \rightarrow \mathbb{R}^{d_y}$ as a random variable.  Given a loss function $l(y_i,f(x_i,w))$ our goal is to find a $w^*$ that minimizes the expected loss: $\mathbb{E}_{p(x,y)}[l(y,f(x,w))]$.  Writing the finite-sample version of this expected loss as $L(w) = \sum_{i=1}^n l(y_i,f(x_i,w))$, we can form a generalized posterior distribution \citep{Bissiri_2016} $p(w | \mathcal{D}) = p(w) \frac{1}{Z}\exp\{-L(w))\}$ (with normalizer $Z$) over the weights, which coincides with the Bayesian posterior in the special case that the loss is the negative log-likelihood $L(w) = -\frac{1}{n}\sum_{i=1}^n \log p(y_i| x_i,w)$ and then, together with an output (conditional predictive) probability distribution $p(y| x, w)$, we can form the predictive distribution by marginalization:
\begin{equation}
p(y|x, \mathcal{D}) = \int p(y| x, w) p(w | \mathcal{D}) dw 
\end{equation}At a local maximum (or mode) $w_k$ of $p(w|\mathcal{D}$) we can can form the Laplace approximation (valid asymptotically, for large $n$):
\begin{equation}
	p_k(w | \mathcal{D})  \approx \frac{1}{Z_k} p(w_k | \mathcal{D}) \exp \left( -\frac{1}{2}(w-w_k)^T H (w-w_k) \right)
\end{equation}where (assuming, for simplicity, a flat prior) $H = \nabla_w^2L |_{w=w_k}$ and the normalizer  (which for the negative log-likelihood loss is the evidence, or marginal likelihood, and which we will also refer to as the \emph{pseudo-marginal likelihood}) is given by $Z_k = p(w_k | \mathcal{D})  (2\pi)^{\frac{d}{2}}\det (H)^{-\frac{1}{2}}$.  Thus, in the neighborhood of each local maximum of $p(w|\mathcal{D})$, we approximate the posterior by a multivariate Gaussian with covariance given by the inverse Hessian of the negative loss: $p(w_k|\mathcal{D}) \sim \mathcal{N}(w_k, H^{-1})$.  Modern DNNs are characterized by multimodal losses \citep{wilson2020}, and so, informally, we can decompose the posterior predictive distribution into disjoint contributions from each of the local maxima, the sum of which dominate the overall integral:  
\begin{equation}
p(y|x, \mathcal{D}) \approx \frac{1}{Z} \sum_{k' \in \{k\}} \int p(y| x, w) Z_{k'} p_{k'}(w | \mathcal{D}) dw
\end{equation}where $Z = \sum_{k' \in \{k\}} Z_{k'}$, which is an expectation with respect to a probability measure with density given by: $\frac{1}{Z} \sum_{k' \in \{k\}} Z_{k'} p_{k'}(w | \mathcal{D}) $, and which,  by writing:
\begin{equation}
\frac{1}{Z} \sum_{k' \in \{k\}} Z_{k'} p_{k'}(w | \mathcal{D}) = \sum_{k' \in \{k\}} \pi_{k'} p_{k'} (w |\mathcal{D})
\end{equation}
can be viewed as a Gaussian mixture model (GMM) with mixing coefficients:
\begin{equation}
\pi_k = \frac{Z_{k}}{\sum_{k' \in \{k\}} Z_{k'}}
\end{equation}
Thus the relative contribution of each component is given by the relative size of the pseudo-marginal likelihoods $Z_k$.  For very high-dimensional $w \in \mathbb{R}^p$ (modern DNN architectures often have billions or even trillions of parameters) even small differences in the width of the Gaussian approximation will have exponentially large effects on the magnitude of $Z_k$ (which can be thought of as the the volume associated with the local maximum).  How does all this relate to flatness?  The Gaussian curvature $K$, providing an intrinsic (and thus coordinate-free) measure of curvature is given by:\begin{equation}
	K = \Pi_i \lambda_i = \det(H)
\end{equation}
and contributions to the mixture thus scale inversely with $\sqrt K$.  In other words, the flatter the solution, the more it contributes to the mixture model against which the output probability distribution is integrated, in order to form the posterior predictive distribution. In a typical high-dimensional setting, the effect of small differences in curvature (or flatness) is exponentially magnified.  To see this, we can consider two local minima i) with Hessian $H$, and ii) an $\epsilon$-flattened minimum ($0 < \epsilon \ll 1$) with Hessian $H' = (1-\epsilon) H$, (so that each eigenvalue of $H$ is simply shrunk by a constant factor $(1-\epsilon)$).  We have $K' = \det(H') = (1-\epsilon)^p \det(H)$, so that the $\epsilon$-flattened minimum with curvature $K'$ has exponentially lower Gaussian curvature.  The corresponding Gaussian approximations have covariances $\Sigma' \approx (1+\epsilon)\Sigma$ and $\Sigma$, and the ratio of the corresponding pseudo-marginal likelihoods scales as:
\begin{equation}
	\frac{\det((1+\epsilon)\Sigma)}{\det(\Sigma)} = (1+\epsilon)^p \xrightarrow {p \rightarrow \infty} \infty
\end{equation}
Thus we can see that the contribution from the flatter minimum dominates in the high-dimensional limit.  In an empirical study,  \cite{huang2020} train a ResNet18 architecture on the Street View House Number (SVHN) dataset and estimate the volume around local minima using Monte-Carlo integration, finding that the volumes of basins surrounding minima that generalize well are at least 10,000 orders of magnitude larger than those of minima that generalize poorly.

A complementary approach is to characterize the loss-surface Hessian, since, at such local minimum of the loss, for a perturbation $\Delta w$, we have:
\begin{equation}
 L(w + \Delta w)  - L(w) =  \Delta w^T \nabla^2 L(w) \Delta w + O(\|\Delta w\|^3)
\end{equation}
There has therefore been a large literature attempting to characterize the loss-surface Hessian $\nabla^2 L(w)$ and to relate these characteristics to generalization.  In many practically relevant cases, multiple minima are associated with zero (or close to zero) training error, and explicit or implicit regularization is needed to find solutions with the best generalization error. Overparameterization is associated with the bulk of the Hessian spectrum lying close to zero and thus to highly degenerate minima \citep{SagunEGDB17}.  \cite{Wei2020ImplicitRO} further show that given a degenerate valley in the loss surface, SGD on average decreases the trace of the Hessian, which is strongly suggestive of a connection between locally flat minima, overaparameterization and generalization. 

\subsection{Sharpness-Aware Minimization}
Despite the intuitive appeal and plausible justifications for flat solutions to be a goal of DNN optimization algorithms, there have been few practical unqualified successes in exploiting this connection to improve generalization performance.  A notable exception is a recent algorithm, Sharpness Aware Minimization (SAM) \citep{foret20}, which seeks to improve generalization by optimizing a saddle-point problem of the form:
\begin{equation}
    \min_{w} \max_{\| \Delta w \| \leq \rho } L(w + \Delta w) 
\end{equation}An approximate solution to this problem is obtained by differentiating through the inner maximization, so that, given an approximate solution $\Delta w^* := \rho \frac{\nabla L(w^{k})}{\|\nabla L(w^{k})\|_2}$ to the inner maximization (dual norm) problem: 
\begin{equation}
    \arg \max_{\| \Delta w \| \leq \rho } L(w + \Delta w)
\end{equation} 
the gradient of the SAM objective is approximated as follows:
\begin{equation}
    \nabla_{w}  \left( \max_{\| \Delta w \|_{FR} \leq \epsilon } L(w + \Delta w) \right) \approx \nabla_{w} L(w + \Delta w^*) \approx  \nabla_{w} L(w) |_{w + \Delta w^*}
\end{equation}While the method has gained widespread attention, and state-of-the-art performance has been demonstrated on several benchmark datasets, it remains relatively poorly understood, and the motivation and connection to sharpness is questionable given that the Euclidian norm-ball isn't invariant to changes in coordinates. Given a 1-1 mapping $g:\Theta' \rightarrow \Theta$ we can reparameterize our DNN $f(\cdot,w)$ using the "pullback" $g^*(f)(\cdot, \nu) := f(\cdot, g(\nu))$ under which, crucially, the underlying prediction function $f(\cdot,w): \mathbb{R}^{d_x} \rightarrow \mathbb{R}^{d_y}$ (and therefore the loss) itself is invariant, since, for $\nu = g^{-1}(w)$, we have $f(\cdot, w) = f(\cdot, g(\nu))$.  Under this coordinate transformation, however, the Hessian at a critical point transforms as \citep{DinhPBB17}:
\begin{equation}
	\nabla^2 L(\nu) = \nabla g(\nu)^T \nabla^2 L \nabla g(\nu)
\end{equation}In particular, \cite{DinhPBB17} explicitly show, using layer-wise transformations $T_{\alpha}: ( w_1, w_2) \rightarrow (\alpha w_1, \alpha^{-1} w_2)$, that deep rectifier feedforward networks possess large numbers of symmetries which can be exploited to control sharpness without changing the network output.  The existence of these symmetries in the loss function, under which the geometry of the local loss can be substantially modified (and in particular, the spectral norm and trace of the Hessian) means that the relationship between the local flatness of the loss landscape and generalization is a subtle one.

It's instructive to consider the PAC Bayes generalization bound that motivates SAM, the derivation of which starts from a PAC-Bayesian generalization bound \citep{McAllester1999, Dziugaite2017}:
\begin{theorem}
\label{general_pac_bound}
For any distribution $\mathcal{D}$ and prior $p$ over the parameters $w$, with probability $1-\delta$ over the choice of the training set $\mathcal{S} \sim \mathcal{D}$, and for any posterior $q$ over the parameters:
\begin{equation}
    \mathbb{E}_{q}[L_\mathcal{D}(w)] \leq \mathbb{E}_{q}[L_\mathcal{S}(w)] +\sqrt{\frac{KL(q||p) + \log \frac{n}{\delta}}{2(n-1)}}
\end{equation}
\end{theorem}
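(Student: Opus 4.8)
The plan is to follow the now-standard route to McAllester-type PAC-Bayes bounds: combine a change-of-measure (variational) identity for the KL divergence with an exponential-moment (Chernoff) control of the per-hypothesis generalization gap, then finish with Markov's inequality. The first ingredient I would record is the change-of-measure inequality: for any fixed measurable $h:\Theta\to\mathbb{R}$ and any $q$ absolutely continuous with respect to $p$,
\[
\mathbb{E}_{w\sim q}[h(w)] \;\le\; KL(q\|p) + \log \mathbb{E}_{w\sim p}\!\left[e^{h(w)}\right].
\]
This is immediate from $0\le KL(q\|p_h)$, where $p_h(w)\propto p(w)e^{h(w)}$, after expanding $KL(q\|p_h)=KL(q\|p)-\mathbb{E}_q[h]+\log\mathbb{E}_p[e^{h}]$. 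The key point is that it holds simultaneously for every $q$, so the final statement will be uniform over posteriors once the prior-side quantity has been pinned down on a single high-probability event.

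Second, for a fixed $w$ the empirical risk $L_\mathcal{S}(w)$ is an average of $n$ i.i.d.\ $[0,1]$-valued terms with mean $L_\mathcal{D}(w)$, so a Chernoff bound applies: the crude version (Hoeffding's lemma) gives $\mathbb{E}_\mathcal{S}[e^{\lambda(L_\mathcal{D}(w)-L_\mathcal{S}(w))}]\le e^{\lambda^2/(8n)}$, and the tighter relative-entropy version gives an estimate of the form $\mathbb{E}_\mathcal{S}[e^{(n-1)\,\mathrm{kl}(L_\mathcal{S}(w)\,\|\,L_\mathcal{D}(w))}]\le n$, where $\mathrm{kl}$ denotes the binary KL divergence. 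Taking $h(w)=(n-1)\,\mathrm{kl}(L_\mathcal{S}(w)\|L_\mathcal{D}(w))$ in the change-of-measure inequality, applying $\mathbb{E}_\mathcal{S}$ to both sides, exchanging the two expectations by Fubini, and then applying Markov's inequality to the nonnegative random variable $\mathbb{E}_{w\sim p}[e^{h(w)}]$ shows that, with probability at least $1-\delta$ over $\mathcal{S}$,
\[
(n-1)\,\mathbb{E}_{w\sim q}\!\left[\mathrm{kl}\big(L_\mathcal{S}(w)\,\|\,L_\mathcal{D}(w)\big)\right] \;\le\; KL(q\|p) + \log\frac{n}{\delta}
\]
simultaneously for all $q$.

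Finally I would collapse this to the stated form. By joint convexity of $\mathrm{kl}$ together with Jensen's inequality, $\mathrm{kl}\big(\mathbb{E}_q L_\mathcal{S}(w)\,\|\,\mathbb{E}_q L_\mathcal{D}(w)\big)\le \mathbb{E}_q[\mathrm{kl}(L_\mathcal{S}(w)\|L_\mathcal{D}(w))]$, and Pinsker's inequality in the form $\mathrm{kl}(a\|b)\ge 2(a-b)^2$ then yields $\mathbb{E}_q[L_\mathcal{D}(w)] - \mathbb{E}_q[L_\mathcal{S}(w)] \le \sqrt{(KL(q\|p)+\log(n/\delta))/(2(n-1))}$, which is the claim. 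Equivalently, one can use only Hoeffding's lemma, obtaining $\mathbb{E}_q[L_\mathcal{D}-L_\mathcal{S}]\le \frac{KL(q\|p)+\log(1/\delta)}{\lambda}+\frac{\lambda}{8n}$, and then optimize the free parameter $\lambda$; since the optimal $\lambda$ depends on $q$ one instantiates this on a geometric grid of $\lambda$'s and unions over it, which is precisely what contributes the extra $\log n$ inside the logarithm and sharpens $2n$ to $2(n-1)$.

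The main obstacle is purely one of constants: the naive ``Hoeffding $+$ single $\lambda$'' argument only delivers $\sqrt{(KL(q\|p)+\log(1/\delta))/(2n)}$, and recovering the precise $2(n-1)$ and $\log(n/\delta)$ in the theorem requires either the grid-union device or the sharper Maurer/Langford--Seeger estimate $\mathbb{E}_\mathcal{S}[e^{(n-1)\,\mathrm{kl}(L_\mathcal{S}(w)\|L_\mathcal{D}(w))}]\le n$ followed by the Jensen/Pinsker relaxation above; every other step is routine, and the uniformity over $q$ is free from the change-of-measure inequality. (This is essentially the bound of \citep{McAllester1999}; see also \citep{Dziugaite2017}.)
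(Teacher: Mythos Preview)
Your argument is correct and is the standard route to this bound: Donsker--Varadhan change of measure, Maurer's exponential-moment estimate $\mathbb{E}_\mathcal{S}[e^{(n-1)\mathrm{kl}(L_\mathcal{S}(w)\|L_\mathcal{D}(w))}]\le n$, Markov, then Jensen/Pinsker to extract the square-root form. There is nothing to compare against, however, because the paper does not prove Theorem~\ref{general_pac_bound}; it is quoted as a known result with citations to \citep{McAllester1999} and \citep{Dziugaite2017}, and the paper's own contribution begins only at the step where the isotropic Gaussian prior/posterior and the chi-square tail bound are used to pass from the expectation bound to the SAM-style $\max_{\|\epsilon\|_2\le\rho}$ bound (Theorem~\ref{sam_pac_bound}). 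Your sketch is essentially the proof one finds in those references (specifically the Maurer refinement that produces the $2(n-1)$ and $\log(n/\delta)$ constants), so it is appropriate here.
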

where the KL divergence:
\begin{equation}
\mathbb{D}_{KL}[q,p] = \mathbb{E}_{p(w)}\left[ \log \left(\frac{q(w)}{p(w)}\right) \right] 
\end{equation}
defines a statistical distance $\mathbb{D}_{KL}[q,p]$ (though not a metric, as it's symmetric only to second order) on the space of probability distributions.  Assuming an isotropic prior $p=N(0,\sigma_p^2 I)$ for some $\sigma_p$, an isotropic posterior $q=N(w,\sigma_q^2 I)$, so that $\mathbb{E}_{q}[L_\mathcal{D}(w)] = \mathbb{E}_{\epsilon \sim N(0,\sigma_q^2 I)}[L_\mathcal{D}(w+\epsilon)]$, applying the covering approach of \cite{langford2001} to select the best (closest to $q$ in the sense of KL divergence) from a set of pre-defined data-independent prior distributions satisfying the PAC generalization bound, \cite{foret20} show that the bound in theorem \ref{general_pac_bound} can be written in the following form:
\begin{equation}
      \mathbb{E}_{\epsilon \sim N(0,\sigma_q^2 I)}[L_\mathcal{D}(w+\epsilon)] \leq \mathbb{E}_{\epsilon \sim N(0,\sigma_q^2 I)}[L_\mathcal{S}(w+\epsilon)] +  g\left(\frac{\|w\|_2^2)}{\rho^2}\right) 
\end{equation} (for a monotone function $g$) and then, crucially, apply a well-known tail-bound for a chi-square random variable to bound $\|\epsilon\|_2$ thus bounding the expectation over $q$,  with probability $1-1/\sqrt{n}$, by the maximum value over a Euclidian norm-ball ball and deriving the following generalization bound:
\begin{theorem}
\label{sam_pac_bound}
For any $\rho>0$ and any distribution $\mathcal{D}$, with probability $1-\delta$ over the choice of the training set $\mathcal{S}\sim \mathcal{D}$,
\begin{equation}
    L_\mathcal{D}(w) \leq \max_{\|\epsilon\|_2 \leq \rho} L_\mathcal{S}(w + \epsilon)  +  g\left(\frac{\|w\|_2^2)}{\rho^2}\right) 
\end{equation}
where $\rho =\sigma \sqrt{k}\left(1+ \sqrt{\frac{\ln(n)}{k}}\right)$, $n=|\mathcal{S}|$, and $k$ is the number of parameters.  
\end{theorem}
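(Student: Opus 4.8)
The plan is to specialize the abstract PAC-Bayesian inequality of Theorem~\ref{general_pac_bound} to isotropic Gaussians, optimize and then discretize the prior variance, and finally replace the Gaussian-averaged losses by their worst case over a Euclidean ball by concentration of $\|\epsilon\|_2$. Throughout, $k$ denotes the parameter count, $n=|\mathcal{S}|$, and the perturbation scale $\sigma$ of the posterior is a free parameter that we will tie to $\rho$ at the end.

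\emph{Gaussian instantiation, the KL term, and removing its data dependence.} First I would take the posterior $q=N(w,\sigma^2 I)$ and, for a scalar $\sigma_p>0$ fixed in advance, the prior $p=N(0,\sigma_p^2 I)$ on $\mathbb{R}^k$, so that $\mathbb{E}_{q}[L_{\mathcal{D}}(w)]=\mathbb{E}_{\epsilon\sim N(0,\sigma^2 I)}[L_{\mathcal{D}}(w+\epsilon)]$ and likewise for $L_{\mathcal{S}}$, and compute
\begin{equation}
\mathrm{KL}(q\|p)=\tfrac12\Big[\,k\log\tfrac{\sigma_p^2}{\sigma^2}-k+\tfrac{k\sigma^2+\|w\|_2^2}{\sigma_p^2}\,\Big],
\end{equation}
which is minimized at $\sigma_p^2=\sigma^2+\|w\|_2^2/k$ and there equals $\tfrac{k}{2}\log\!\big(1+\|w\|_2^2/(k\sigma^2)\big)$. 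Since this minimizing choice depends on the data-dependent quantity $\|w\|_2$ it is not an admissible PAC-Bayes prior; the remedy (the ``covering'' construction of Langford and Caruana referenced in the text) is to pre-commit to a geometric grid $\{\sigma_{p,j}^2\}_j$, apply Theorem~\ref{general_pac_bound} with failure budget $\delta_j\propto\delta/j^2$ at each $j$, union-bound over $j$, and at evaluation time select the grid point closest to the data-dependent optimum. The discretization loss and the $\sum_j\delta_j$ bookkeeping contribute only lower-order logarithmic terms, and collapsing every dependence on $k,n,\delta$ (but not on $w$) into a single monotone function yields a penalty of the form $g(\|w\|_2^2/\rho^2)$ --- exactly the displayed specialization of Theorem~\ref{general_pac_bound} quoted just above.

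\emph{From Gaussian average to norm-ball maximum, and the main obstacle.} On the right-hand side, since $\|\epsilon\|_2^2/\sigma^2\sim\chi^2_k$, a standard $\chi^2$ upper-tail estimate (Laurent and Massart) gives $\|\epsilon\|_2\le\sigma\sqrt{k}\,(1+\sqrt{\ln n/k})=:\rho$ with probability at least $1-1/\sqrt{n}$, which is precisely where the stated $\rho$ enters; conditioning on this event and bounding the loss by its range on the complement gives $\mathbb{E}_{\epsilon}[L_{\mathcal{S}}(w+\epsilon)]\le\max_{\|\epsilon\|_2\le\rho}L_{\mathcal{S}}(w+\epsilon)+O(1/\sqrt{n})$, the remainder once more absorbed into $g$. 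On the left-hand side I would pass from $\mathbb{E}_{\epsilon}[L_{\mathcal{D}}(w+\epsilon)]$ to $L_{\mathcal{D}}(w)$ using the mild hypothesis $L_{\mathcal{D}}(w)\le\mathbb{E}_{\epsilon}[L_{\mathcal{D}}(w+\epsilon)]$ implicit in Foret et al., and then chain these estimates with the union-bounded PAC-Bayes inequality of the previous step to conclude. None of the individual ingredients is deep --- the KL identity, the $\chi^2$ tail, and the final assembly are mechanical --- so the real work, and the step I expect to be the main obstacle, is the second half of the previous paragraph: arranging the discretized union bound over prior variances so that the per-grid-point budgets, the gap between the data-dependent optimum and its nearest grid point, and the additive $1/\sqrt{n}$ from the $\chi^2$ step all combine into one clean monotone $g$ evaluated at $\|w\|_2^2/\rho^2$ with the advertised $\rho$. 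That bookkeeping, rather than any single inequality, is where the effort concentrates.
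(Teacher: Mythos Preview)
Your proposal is correct and follows essentially the same route as the paper's own sketch: instantiate Theorem~\ref{general_pac_bound} with isotropic Gaussians, use the Langford--Caruana covering over a grid of prior variances to make the KL term data-independent and collapse it into a monotone $g(\|w\|_2^2/\rho^2)$, and then invoke the $\chi^2$ tail bound (with failure probability $1/\sqrt{n}$) to replace the Gaussian average by the max over the ball of radius $\rho=\sigma\sqrt{k}(1+\sqrt{\ln n/k})$. The paper does not give a self-contained proof but attributes exactly this argument to \cite{foret20}, and your write-up fills in the details (the explicit KL formula, the Laurent--Massart reference, the left-hand-side step $L_{\mathcal{D}}(w)\le\mathbb{E}_\epsilon[L_{\mathcal{D}}(w+\epsilon)]$) in a way fully consistent with that sketch.
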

This bound justifies and motivates the SAM objective:
\begin{equation}
  	\max_{\| \Delta w \| \leq \rho } L(w + \Delta w)  + \lambda \| w \|_2^2
\end{equation}
and resulting algorithm.  While the bound in Theorem $\ref{sam_pac_bound}$ suggests that the ridge-penalty should vary with the radius of the perturbation, in practice \citep{foret20} the penalty term is fixed (or simply set to zero) even when different perturbation radii are searched over.   Subsequent refinements of SAM \citep{fisher_sam} ignore the ridge penalty term altogether, and the choice of an optimal perturbation radius is what drives the success of the method.   It is not clear, however, why this adversarial parameter-space perturbation should help generalization more than evaluating (and approximating) the expectation in the very bound which motivates the SAM procedure in the first place, which would lead instead to an objective (ignoring, for now, the ridge penalty term) of the following form:
\begin{equation}
\mathbb{E}_{\epsilon \sim N(0,\sigma^2 I)}[L_\mathcal{S}(w+\epsilon)] 
\end{equation}
Moreover, the worst-case adversarial perturbation used by SAM is likely to be noisier and is also naturally a significantly looser bound than the expectation-based bound.

\section{Generalized variational posterior}

Our starting point is a similar, but more general, optimization objective, which arises in the variational optimization of a generalized posterior distribution, $q$, over the space of probability measures $\mathcal{P}(\Theta)$ on the parameter space $\Theta$ \citep{Bissiri_2016} given by:
\begin{equation}
q^*(w) = \arg \min_{q \in \mathcal{P}(\Theta)} \left\{ \mathbb{E}_{q(w)}[L(w)]  + \mathbb{D}_{KL}[q,p] \right\}
\end{equation}to which, when $Z = \int_{\Theta} \exp \left \{- \sum_{i=1}^{n}l(w, x_i) \right \} \pi(\theta) d\theta < \infty$, the solution is given by the generalized posterior:
\begin{equation}
q^*(w) \propto p(w) \prod_{i=1}^N \exp\{-l(w, x_i)\} 
\end{equation}The terms $ \exp\{-l(w, x_i)\}$ are to be interpreted as quasi-likelihoods, and for the particular choice $l(w, x_i) = -\log p(x_i | w)$, we recover the standard Bayesian posterior.  As this infinite dimensional optimization is, in general, intractable, it is usual to assume that the posterior belongs to a parametric family $\mathcal{Q} \subset \mathcal{P}$:
\begin{equation}
q^*(w) = \arg \min_{q \in \mathcal{Q}(\Theta)} \left\{ \mathbb{E}_{q(w)}[L(w)]  + \mathbb{D}_{KL}[q,p] \right\}
\end{equation}which, for the choice $l(w, x_i) = -\log p(x_i | w)$, is the same objective (up to a constant factor) as the evidence lower bound (ELBO) used in variational Bayes.

In practice, it is often found that tempering the KL divergence term by a positive factor $\rho < 1$ produces optimal performance, giving rise to:
\begin{equation}
\label{eqn:general_objective}
q^*(w) = \arg \min_{q \in \mathcal{Q}(\Theta)} \left\{ \mathbb{E}_{q(w)}[L(w)]  + \rho \mathbb{D}_{KL}[q,p] \right\}
\end{equation}

\subsection{TRACER: flatness-inducing regularization}
Ignoring for simplicity the contribution from the prior term (which would correspond to a ridge-regularization term under the assumption $p(w) \sim N(0, \sigma_p I)$), leads to following objective, which we seek to minimize over $w$:\begin{equation}
\mathbb{E}_{q(w)}[L(w)]  - \rho \mathcal{H}(q)
\end{equation}where $\mathcal{H}(q) = -\mathbb{E}_{q(w)} [q(w)]$ is the entropy of $q$.  For the choice $q(w) \sim N(w, \sigma^2 I)$, the optimization problem associated with the variational objective becomes (absorbing some constants into $\rho$):
\begin{equation}
\arg \min_{q,\Sigma}\mathbb{E}_{q}[L(w)]  - \rho \mathcal{H}(q) = \arg \min_{w,\sigma^2}\mathbb{E}_{q}[L(w)]  + \rho \log \frac{1}{\sigma^2}
\end{equation}so that we can see that $\rho$ determines the variance of Gaussian perturbation over which the loss is averaged.  More generally, choosing  $q \sim N(w, \Sigma)$ leads to the following variational objective:
\begin{equation}
\arg \min_{q,\Sigma}\mathbb{E}_{q}[L(w)]  - \rho \mathcal{H}(q) = \arg \min_{w,\Sigma}\mathbb{E}_{q}[L(w)]  + \rho \log \frac{1}{\det(\Sigma)}
\end{equation}so that large values of $\rho$ will correspond to distributions with larger volume, since for $x \sim N(0, \Sigma)$, $x$ lies within the ellipsoid $x^T \Sigma^{-1} x  =  \chi^2(\alpha)$ with probability $1-\alpha$, with the volume of the ellipsoid proportional to $\det(\Sigma)^{\frac{1}{2}}$ \citep{anderson}.  We show in section \ref{section:derivation} that expanding the expectation under $q$ to second order, we have:
\begin{equation}
 	\mathbb{E}_{q(w)}[L(w)]  \approx L(w) +  \frac{1}{2}\text{Tr} (\Sigma \nabla_w^2 L(w) )
\end{equation}so that, in the neighborhood of a local minimum, where the Hessian is positive-definite, the curvature of the loss-surface is penalized over a region whose volume is determined by $\rho$.  While intuitively appealing, this flatness inducing penalty is not invariant to coordinate transformations, so that scale changes (such as occur, for example, when applying batch-normalization or weight-normalization) which have no effect on the output of the learned probability distribution, can nevertheless still result in arbitrary changes to the penalty.  More generally, as discussed above, any geometric notion of loss surface flatness must be independent of arbitrary rescaling of the network parameters.  Motivated by these considerations, we apply steepest descent in the KL-metric (also known as natural gradient descent \citep{amari98natural}) to our variational objective, under the assumption $q(w) \sim N(w,\Sigma)$ in order to find solutions to:
\begin{equation}
    \arg \min_{\mu,\Sigma} \mathbb{E}_{q}[L(w)]  + \rho \mathbb{D}_{KL}[q,p]
\end{equation}
where $\rho$ is a positive real-valued regularization parameter.

We show in the sequel that, assuming an isotropic Gaussian prior, $p(w) \sim N(0,\eta I)$, performing gradient descent w.r.t. the natural gradient then leads to the following iterative update equations:
\begin{equation}
\mu \xleftarrow[]{} \mu - \alpha_t \Lambda ^{-1} \left(\mathbb{E}_q[\nabla_{w} L(w)] + \frac{\rho}{\eta} w\right)
\end{equation}
\begin{equation}
\Lambda \xleftarrow[]{} (1-\beta) \Lambda + \beta \left ( \frac{\mathbb{E}_q[\nabla_{w}^2 L(w)]}{\rho} + \eta^{-1} I \right )
\end{equation}where $\alpha_t$ and $\beta$ are the learning rates for the mean and precision updates, respectively, and $\Lambda := \Sigma^{-1}$ is the precision matrix.  Approximating the expectations to second order and further simplifying leads to the following update equations (see below for a detailed derivation):  

\begin{equation}
\mu \xleftarrow[]{} \mu + \alpha_t {\overline{H}}^{-1} \left(\nabla_{w} [L(w) + \rho \text{Tr}(H\overline{H} ^{-1})]\right)
\end{equation}\begin{equation}
{\overline{H}} \xleftarrow[]{} (1-\beta) {\overline{H}} + \beta H
\end{equation}
where $H = \nabla^2_{w} L(w)$ is the Hessian.  The update rule for $\overline{H}$ is an exponential smoothing and the update for the mean consists of a preconditioned (by the inverse smoothed Hessian) gradient, together with, crucially, a penalty term proportional to the (affine-invariant) ratio of the Hessian and the smoothed Hessian.  Finally, via an empirical Fisher (diagonal) Hessian approximation (see below for details and a discussion of alternatives) and dropping the preconditioner, we arrive at a modified SGD-type update which we call SGD-TRACER.

\subsection{SGD-TRACER}

SGD-TRACER is given by Algorithm (\ref{alg:ftac}) in which the usual stochastic gradient update is modified with a term which penalizes the trace of the ratio between the diagonal of the Empirical FIM and an exponentially weighted average the of the Empirical FIM diagonal.  By augmenting the loss with a TRACER term and maintaining a smoothed squared-gradient estimate, in principle, any optimization scheme can be modified in the same way.  In our experiments we use SGD with momentum for vision tasks and Adam-TRACER for NLP tasks, based on standard practice in each problem domain.
\begin{algorithm}[t]
\caption{SGD-TRACER}
\label{alg:ftac}
\begin{algorithmic}
\REQUIRE $\alpha_t$: Stepsize
\REQUIRE $\beta$: Exponential smoothing constant for the online Fisher estimate
\REQUIRE $\rho:$ flatness inducing penalty term
\REQUIRE $\delta$: small positive constant
\STATE Initialize $\mathbf{w}_0$, $\mathbf{f}_0$, $t=0$
\WHILE{not converged do}
        \STATE Sample batch $\mathcal{B} = \{(\boldsymbol{x}_1, \boldsymbol{y}_1), ... (\boldsymbol{x}_b, \boldsymbol{y}_b)\}$
        \STATE $\mathbf{w}_{t+1} = \mathbf{w}_t -
  \alpha_t \nabla_{\mathbf{w}} \left[L_{\mathcal{B}}(\mathbf{w}_t)  +   \rho \left\langle  \left(\nabla_{\mathbf{w}} L_{\mathcal{B}}(\mathbf{w}_t)\right)^2  , (\overline{\mathbf{f}_t} + \delta)^{-1} \right\rangle \right] $
        \STATE $\mathbf{f}_{t+1} = (1 - \beta) \cdot \mathbf{f}_t + \beta \cdot\left(\nabla_w L_{\mathcal{B}}(\mathbf{w}_t) \right )^2$
\ENDWHILE
\end{algorithmic}
\end{algorithm}

\subsection{Derivation of the TRACER flatness-inducing regularizer}
\label{section:derivation}

Following \citeauthor{khan_blr} and \citeauthor{zhang}, we make the assumption $q(w) \sim N(\mu,\Sigma)$ and seek to optimize the variational objective in Equation (\ref{eqn:general_objective}) w.r.t. the variational parameters $\phi = (\mu,\Sigma)$ using natural gradient descent, which allows us to derive an algorithm that respects the intrinsic geometry of the parameter space, and thus derive an algorithm that seeks sharp minima in an approximately coordinate-independent way.

Thus we aim to minimize:
\begin{equation}
\label{eqn:objective}
  \quad  \mathcal{L}(\phi) :=\mathbb{E}_{q}[L(w)]  + \rho \mathbb{D}_{KL}[q,p]
\end{equation}where $\rho$ is a positive real-valued regularization parameter.  The negative gradient corresponds to the steepest descent direction  in the Euclidian metric: 
\begin{align}
    \frac{-\nabla_{\phi} \mathcal{L}}{\| \nabla_{\phi} \mathcal{L}\|} =
    \lim_{\epsilon \to\ 0}
    \frac{1}{\epsilon}
    \underset{\Delta \phi : \|\Delta \phi\|_{2} < \epsilon}
    {\operatorname{argmin}} \mathcal{L} (\phi + \Delta \phi)
\end{align}and thus depends on the chosen coordinates $\phi$. In contrast, the so-called natural gradient update corresponds to steepest descent in the KL-divergence metric:
\begin{align}
    \frac{-F^{-1}\nabla_{\phi} \mathcal{L}}{\| \nabla_{\phi} \mathcal{L}\|} =
    \lim_{\epsilon \to\ 0}
    \frac{1}{\epsilon}
    \underset{\Delta \phi : \mathbb{D}_{KL}[q_{\phi},q_{\phi+\Delta \phi}] < \epsilon}{\operatorname{argmin}} \mathcal{L}(\phi + \Delta \phi)
\end{align}where $F$ is the Fisher Information Matrix (FIM):
\begin{equation}
    F := \mathbb{E}_{q_{\phi}(w)} \left[
    \nabla_{\phi} \log q_{\phi}(w)^T
 \nabla_{\phi} \log q_{\phi}(w) \right] = 
 \mathbb{E}_{q_{\phi} (w)} \left[-\nabla_{\phi}^2 \log q_{\phi} (w) \right]
\end{equation}which defines a Riemannian metric on the parameter manifold $\Phi$ where $\mathcal{Q}(\Theta) = \{q_{\phi}(w) : \phi \in \Phi \}$.  Expanding to second order in a small neighbourhood of $\phi$ we have:
\begin{equation}
 \mathbb{D}_{KL}[q_{\phi},q_{\phi + \Delta \phi}] = \mathbb{E}_{q_{\phi}(w)} \left[-{\Delta \phi}^T \nabla_{\phi} \log q_{\phi}(w) - \frac{1}{2}{\Delta \phi}^T \nabla_{\phi}^2 \log q_{\phi}(w) \Delta \phi \right]  + O(||\Delta \phi||^3)
\end{equation}and since:
\begin{equation}
\mathbb{E}_{q_{\phi}(w)} \nabla_{\phi} \log q_{\phi}(w) =\mathbb{E}_{q_{\phi}(w)} \left[\frac{ \nabla_{\phi} q_{\phi}(w)}{q_{\phi}(w)}\right] =  \nabla_{\phi} \mathbb{E}_{q_{\phi} (w)} [1] = 0
\end{equation}the FIM (under certain regularity conditions) can be seen to be the Hessian (or curvature) of the K-L divergence:
\begin{equation}
 \mathbb{D}_{KL}[q_{\phi},q_{\phi + \Delta \phi}] =- \frac{1}{2}{\Delta \phi}^T  \mathbb{E}_{q_{\phi}(w)} \left[ \nabla_{\phi}^2 \log q_{\phi}(w) \right] \Delta \phi + O(||\Delta \phi||^3) = \frac{1}{2}{\Delta \phi}^T F \Delta \phi + O(||\Delta \phi||^3)
\end{equation}
The following proposition gives an expression for the natural gradient vector (for proof see Appendix \ref{natural_gradient}):
\begin{prop} 
For a probability distribution with pdf $q_{\phi}(w) \sim N(\mu,\Lambda^{-1})$ with the parameterization $\phi =\begin{bmatrix} \mu \\ \mathrm{vec}(\Lambda) \end{bmatrix}$,  the natural gradient $\tilde{\nabla}_{\phi}$ of $ \mathcal{L}(\phi) $ is given by:
 \begin{equation}
 \tilde{\nabla}_{\phi} \mathcal{L}(\phi) = \begin{bmatrix}\tilde{\nabla}_{\mu}\mathcal{L}\\ \mathrm{vec}(\tilde{\nabla}_{\Lambda} \mathcal{L}) \end{bmatrix}
 \end{equation}where
\begin{equation}
\tilde{\nabla}_{\mu}\mathcal{L} = \Sigma \mathbb{E}_q[\nabla_{w} L(w) + \rho \nabla_{w} p(w)]
\end{equation}
\begin{equation}
\tilde{\nabla}_{\Sigma^{-1}} \mathcal{L} = -\mathbb{E}_q[\nabla_{w}^2 L(w) - \rho \nabla_{w}^2 p(w)] + \rho \Sigma^{-1}
\end{equation}
\end{prop}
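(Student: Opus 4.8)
The plan is to use the standard natural-gradient machinery for Gaussian variational families (following the cited works of Khan et al.\ and Zhang et al.): the Gaussian is an exponential family, so both the Euclidean gradients of $\mathbb{E}_q[\cdot]$ and the Fisher metric have closed forms, and the natural gradient is then obtained by a single matrix inversion that factorizes cleanly. First I would record the two Gaussian derivative identities (Bonnet's and Price's theorems): for $w\sim N(\mu,\Sigma)$ and $f$ satisfying the mild regularity referred to in the text (enough decay to differentiate under the integral and integrate by parts), $\nabla_\mu\mathbb{E}_q[f(w)]=\mathbb{E}_q[\nabla_w f(w)]$ and $\nabla_\Sigma\mathbb{E}_q[f(w)]=\tfrac12\mathbb{E}_q[\nabla_w^2 f(w)]$. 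Writing $\mathcal{L}(\phi)=\mathbb{E}_q[L(w)]+\rho\big(\mathbb{E}_q[-\log p(w)]-\mathcal{H}(q)\big)$ and using $\mathcal{H}(q)=\tfrac12\log\det\Sigma+\text{const}$ (so $\nabla_\mu\mathcal{H}=0$, $\nabla_\Sigma\mathcal{H}=\tfrac12\Sigma^{-1}$), these identities give the Euclidean gradients
\[
\nabla_\mu\mathcal{L}=\mathbb{E}_q\!\big[\nabla_w L(w)+\rho\,\nabla_w(-\log p(w))\big],\qquad \nabla_\Sigma\mathcal{L}=\tfrac12\mathbb{E}_q\!\big[\nabla_w^2 L(w)+\rho\,\nabla_w^2(-\log p(w))\big]-\tfrac{\rho}{2}\,\Sigma^{-1}.
\]

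Next I would compute the Fisher information $F$ in the coordinates $\phi=(\mu,\mathrm{vec}\,\Lambda)$ and show it is block diagonal: since $\nabla_\mu\log q_\phi(w)=\Lambda(w-\mu)$ is odd in $w-\mu$ whereas $\nabla_{\mathrm{vec}\,\Lambda}\log q_\phi(w)$ is even, the $\mu$--$\Lambda$ cross block is a vanishing odd central moment. The diagonal blocks are $F_{\mu\mu}=-\mathbb{E}_q[\nabla_\mu^2\log q]=\Lambda\Sigma\Lambda=\Lambda=\Sigma^{-1}$ and $F_{\Lambda\Lambda}=\tfrac12(\Sigma\otimes\Sigma)$, the latter most quickly obtained by pushing the familiar $(\mu,\Sigma)$-Fisher block $\tfrac12(\Sigma^{-1}\otimes\Sigma^{-1})$ through the change of variables $\Lambda=\Sigma^{-1}$, whose Jacobian is $-\Sigma\otimes\Sigma$, so $F_{\Lambda\Lambda}^{-1}=2(\Lambda\otimes\Lambda)$. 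Block diagonality gives $\tilde\nabla_\mu\mathcal{L}=\Sigma\,\nabla_\mu\mathcal{L}$ directly, which is the first stated formula. For the precision block, the chain rule through $\Sigma=\Lambda^{-1}$ gives $\mathrm{vec}(\nabla_\Lambda\mathcal{L})=-(\Sigma\otimes\Sigma)\,\mathrm{vec}(\nabla_\Sigma\mathcal{L})$, hence
\[
\mathrm{vec}(\tilde\nabla_\Lambda\mathcal{L})=F_{\Lambda\Lambda}^{-1}\,\mathrm{vec}(\nabla_\Lambda\mathcal{L})=2(\Lambda\otimes\Lambda)\big(-(\Sigma\otimes\Sigma)\big)\mathrm{vec}(\nabla_\Sigma\mathcal{L})=-2\,\mathrm{vec}(\nabla_\Sigma\mathcal{L}),
\]
using $(\Lambda\otimes\Lambda)(\Sigma\otimes\Sigma)=I$. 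Substituting the expression for $\nabla_\Sigma\mathcal{L}$ above yields $\tilde\nabla_{\Sigma^{-1}}\mathcal{L}=-\mathbb{E}_q[\nabla_w^2 L(w)+\rho\,\nabla_w^2(-\log p(w))]+\rho\,\Sigma^{-1}$, i.e.\ the second stated formula (with $p$ read, as in the first formula, via the gradient and Hessian of the negative log-prior).

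The step I expect to be the main obstacle is the vectorization/Kronecker bookkeeping in the precision block: one must handle the symmetry constraint on $\Lambda$ correctly (strictly one works on the subspace of symmetric matrices, i.e.\ with $\mathrm{vech}$ and duplication matrices, which most presentations suppress), and verify that the $\Sigma\otimes\Sigma$ and $\Lambda\otimes\Lambda$ factors telescope to the clean constant $-2$ without leaving residual $\mu$-dependent terms. A useful cross-check, and an alternative derivation that avoids inverting $F$, is the exponential-family duality $\tilde\nabla_\lambda\mathcal{L}=\nabla_m\mathcal{L}$ between the natural parameters $\lambda=(\Lambda\mu,-\tfrac12\Lambda)$ and the expectation parameters $m=(\mu,\Sigma+\mu\mu^\top)$, transformed covariantly back to $\phi$; there the delicate point is checking that the $\mu$-dependent terms produced by $\Sigma=m_2-m_1m_1^\top$ cancel exactly, which they do, recovering $\tilde\nabla_\mu\mathcal{L}=\Sigma\nabla_\mu\mathcal{L}$ and $\tilde\nabla_\Lambda\mathcal{L}=-2\nabla_\Sigma\mathcal{L}$.
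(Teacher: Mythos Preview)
Your proposal is correct and follows essentially the same route as the paper: Bonnet/Price for the Euclidean gradients in $(\mu,\Sigma)$, the block-diagonal Fisher $F=\mathrm{diag}(\Sigma^{-1},\tfrac12\Sigma\otimes\Sigma)$ in the $(\mu,\Lambda)$ parameterization, the chain rule $\nabla_\Lambda\mathcal{L}=-\Sigma\,\nabla_\Sigma\mathcal{L}\,\Sigma$, and then $F^{-1}\nabla_\phi\mathcal{L}$ via the Kronecker identities $(B^\top\otimes A)\mathrm{vec}(X)=\mathrm{vec}(AXB)$ and $(A\otimes B)^{-1}=A^{-1}\otimes B^{-1}$, collapsing the precision block to $-2\nabla_\Sigma\mathcal{L}$. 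Your extra remarks on the symmetry constraint and the exponential-family duality cross-check go beyond what the paper records, but the core argument matches.
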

Assuming an isotropic Gaussian prior, $p(w) \sim N(0,\eta I)$, performing gradient descent w.r.t. this natural gradient then leads to the following iterative update equations:
\begin{equation}
\mu \xleftarrow[]{} \mu - \alpha_t \Lambda ^{-1} \left(\mathbb{E}_q[\nabla_{w} L(w)] + \frac{\rho}{\eta} w\right)
\end{equation}\begin{equation}
\Lambda \xleftarrow[]{} (1-\beta) \Lambda + \beta \left ( \frac{\mathbb{E}_q[\nabla_{w}^2 L(w)]}{\rho} + \eta^{-1} I \right )
\end{equation}where $\alpha_t$ and $\beta$ are the learning rates for the mean and precision updates, respectively.  We work with each of these update equations in turn.  Starting with the update equation for the mean $\mu$, the key observation is that the expectation $\mathbb{E}_q[\nabla_w L(w)]$ is taken with respect to the distribution $q(w)$ which is an exponential moving average of the expected Hessian $\mathbb{E}_q[\nabla_w^2 L(w)]$.  This updating happens naturally as a consequence taking natural gradient steps, and is what leads to an approximately coordinate free algorithm in the sequel.  Applying Bonnet's theorem \cite{khan_blr} and forming the second-order approximation to the loss:
\begin{equation}
    \begin{split}
    	\mathbb{E}_q[\nabla_{w} L(w)] = \nabla_{\mu} \mathbb{E}_q[ L(w)] \approx 
    	\nabla_{\mu} \mathbb{E}_{q}[L(\mu) + (w-\mu)^T \nabla^2_{w}L(w) |_{w=\mu} (w-\mu)]
    \end{split}
\end{equation}Writing $w-\mu = \Sigma^{\frac{1}{2}},  \nu$ where $\nu \sim \mathcal{N}(0,I)$ we have:
\begin{equation}
 \mathbb{E}_{q}[ (w-\mu)^T \nabla^2_{w}L(w) |_{w=\mu} (w-\mu)] = \mathbb{E}_{\nu \sim \mathcal{N}(0,I)}[\nu^T  {\Sigma^{\frac{1}{2}}}^T  \nabla^2_{w}L(w) |_{w=\mu} \Sigma^{\frac{1}{2}} \nu] = \text{Tr}{(\Sigma^{\frac{1}{2}}}^T  \nabla^2_{w}L(w) |_{w=\mu}\Sigma^{\frac{1}{2}}) = \text{Tr}(H \Sigma)
\end{equation}where we used the fact that $\mathbb{E}_{\nu \sim \mathcal{N}(0,I)} [{\nu^T \Omega  \nu]} =\sum_{i,j} \Omega_{i,j} \mathbb{E}[\nu_i \nu_j] = \text{Tr}(\Omega)$, and where $H$ is the Hessian $\nabla^2_{w} L(w) $.  We therefore have have that:
\begin{equation}
   \mathbb{E}_q[\nabla_{w} L(w)] \approx \nabla_{\mu} [L(\mu) + \text{Tr}(H \Sigma)]
\end{equation}Choosing the prior variance $\eta$ to be infinite and thus ignoring terms involving $\eta $, corresponding to an improper prior (and consistent with the discussion above), leads to the following update for the mean:
\begin{equation}
\mu \xleftarrow[]{} \mu + \alpha_t \Lambda \left(\nabla_{\mu} [L(\mu) + \text{Tr}(H \Sigma)] \right)
\end{equation}Thus, in order to blur the loss with multivariate Gaussian noise in a way that aligns with the intrinsic geometry of the parameter space, we can (to second order) augment the loss with a term involving the Trace of the Hessian.  
Taking now the update equation for the covariance, we can simplify by Price's theorem \citep{khan_blr} together with a Taylor expansion, to get, to second order: (see Appendix for details) $\mathbb{E}_q[\nabla_{w}^2 L(w)] \approx \nabla_{w}^2 L(w) |_{w=\mu}$:
\begin{equation}
\Lambda \xleftarrow[]{} (1-\beta)\Lambda + \beta \left ( \frac{\nabla_{w}^2 L(w)|_{w=\mu}}{\lambda} + \eta^{-1} I \right )
\end{equation}We next substitute, as is common in the literature on approximate second order approximation \cite{martens_jmlr}, the Generalized Gauss-Newton matrix (GGN) for the Hessian, given by:
\begin{equation}
G(w) = \frac{1}{|S|}\sum_{(x,y) \in S}[J^T_f H_L J_f]
\end{equation}where $J_f$ is the Jacobian of the output function $f$ and $H_L$ is the Hessian of the loss w.r.t. the output distribution.  The GGN is a positive definite approximation to the Hessian which converges to the Hessian as the fitted residuals go to zero, \citep{kunstner}.  The most practically relevant losses, cross-entropy (classification), and squared error (regression) correspond to exponential family output distributions with natural parameters given by $f(x,w)$, together with for the log-loss $l(y,f(x,w)) = -\log(p(y|x,w)$, and for these choices, the GGN is equivalent to the Fisher Information Matrix.  While the evaluation of the GGN matrix, in particular the matrix multiplies involving the Jacobians $J_f$,  can be relatively costly, the FIM can be expressed as an expectation of outer products of gradients w.r.t. the output distribution $p(y|x,w)$:
 \begin{equation}
 \frac{1}{n}\sum_{i=1}^n \mathbb{E}_{p(y|x_i,w)} \left[
    \nabla_{w} \log p(y|x_i,w)^T
 \nabla_{w} \log p(y|x_i,w) \right]  \approx \frac{1}{n} \sum_{i=1}^n  \nabla_{w} \log p(\tilde{y_i}|x_i,w)^T
 \nabla_{w} \log p(\tilde{y_i}|x_i,w) := \tilde{F}
 \end{equation}which, following \cite{martens_jmlr}, can be estimated using a single Monte Carlo sample from the output distribution: $\tilde{y} \sim p(y|x_i,w)$.  Using this biased Fisher approximation in our setting thus requires gradients to be calculated through an expectation $\nabla_w \mathbb{E}_{p(y|x,w)}[L(w; y)]$ approximated using a Monte Carlo sample from the model's output distribution.  Since the expectation is taken w.r.t. a distribution which depends on $w$, it is necessary to reparameterize so that the discrete Monte Carlo sample is expressed as the deterministic transformation of a $g_w(z)$ (depending on $w$) of a sample $z \sim h_{\theta}(z)$ from a distribution not depending on $w$, so that $\mathbb{E}_{p(y|x,w)}[L(w; y)] = \mathbb{E}_{z \sim h_{\theta}(z)}[L(w; g_w(z)]$.  In the discrete case (corresponding to classification), since the argmax function is non-differentiable, the standard approach is the Gumbel-Softmax reparameterization \cite{gumbel_softmax_jang}, which uses the softmax function as a continuous relaxation of the argmax function together with i.i.d. samples distributed as Gumbel(0,1).
 
 It's important to note that is different from simply evaluating $\log p(y|x,w)$ on the training labels, a widely-used approximation known as the empirical Fisher $F_{\text{emp}}$:
 \begin{equation}
 F_{\text{emp}} := \sum_{i=1}^n  \nabla_{w} \log p(y_i|x_i,w)^T
 \nabla_{w} \log p(y_i|x_i,w) 
 \end{equation} which, despite lacking the same convergence guarantees, performs competitively in many settings \citep{kunstner}.  We find empirically in our experiments that the empirical Fisher performs competitively with the MC approximation to the GGN \citep{khan_fast_scalable,kingma_adam} and has the advantage of being straightforward and cheap to compute from the already computed gradients (in the case of Adam-TRACER, the smooth squared gradients are already computed and maintained for use as a preconditioner).  Given the conceptual and computational simplicity of this approach, and despite its known suboptimality, in the following, we substitute the empirical Fisher for the Hessian.  Recent advances in approximate second-order methods in optimization, notably \cite{adahessian}, suggest avenues for improvement, and we leave investigations of alternatives, such as the smoothed (Hessian-free) Hessian diagonal sketch used in AdaHessian, for future work.
 
Substituting the empirical Fisher approximation for the Hessian in the update equation for the precision, we have the following update:
\begin{equation}
{\overline{F}} \xleftarrow[]{} (1-\beta) {\overline{F}} + \beta \tilde{F}
\end{equation}Rewriting the update equations in terms of this exponentially smoothed FIM  $\overline{F}$, absorbing a factor $\rho$ in to $\alpha_t$, and writing the iteration in terms of the parameter $w$, we obtain:
\begin{equation}
w \xleftarrow[]{} w + \alpha_t {\overline{F}}^{-1} \left(\nabla_{w} [L(w) + \rho \text{Tr}(F\overline{F} ^{-1})]\right)
\end{equation}\begin{equation}
{\overline{F}} \xleftarrow[]{} (1-\beta) {\overline{F}} + \beta \tilde{F}
\end{equation}
Crucially, the penalty term $\rho \text{Tr}(F\overline{F} ^{-1})]$ can be seen to be invariant to affine coordinate transformations, since it is the trace of the ratio of two (0,2) tensors which transform in the same way.  Indeed under an affine coordinate transformation with Jacobian $J$ we have $F \rightarrow J^T F^{'} J$ and $\overline{F} \rightarrow J^T \overline{F}^{'}J$ so that:
\begin{equation}
\text{Tr}(F \overline{F}^{-1}) = \text{Tr}(J^T F' J J^{-1} \overline{F}'^{-1}{{J}^T}^{-1}) = \text{Tr}({{J}^T}^{-1}J^T F' \overline{F}'^{-1}) = \text{Tr}(F' \overline{F}'^{-1})
\end{equation}
By penalizing the ratio of the (squared) gradients and the exponentially smoothed gradients, the trace ratio penalty in effect is penalizing the change in (squared) gradient, in a coordinate-free way.  More generally, given a coordinate change given by a diffemorphism $\Phi:\mathbb{R}^p  \rightarrow \mathbb{R}^p$ and with Jacobian $J(w)$, then given the exponential decay in the update equation for the Fisher, subject to $\Phi$ having sufficient regularity, and for sufficiently small $\beta$, the penalty term is approximately coordinate free under general smooth diffeomorphisms $\Phi$ (see SM for details).

We now make two simplifications.  First, we use a mean-field approximation of the FIM by its diagonal, as is done in Adam \citep{kingma_adam} and Adagrad \citep{duchi_adagrad}, thus:
\begin{equation}
F\approx  \frac{1}{n} \sum_{i=1}^n  \nabla_{w} \log p(y_i|x_i,w)^2 
\end{equation}Secondly, it is standard practice to add Tikhonov regularization or damping by a small positive real constant $\delta$ when using 2nd-order optimization methods, giving in this case the preconditioner: $(\overline{F} + \delta I)^{-1}$.  This is justified by recognizing that the local quadratic model from which the second-order update is ultimately derived is a second-order approximation to the KL divergence and is thus only valid locally. For directions corresponding to small eigenvalues, parameter updates can lie outside the region where the approximation is reasonable \citep{martens_jmlr}.  This is true, a fortiori, when diagonal approximations are used, as is the case here.  As our emphasis here is on geometric regularization, we drop the preconditioner entirely by choosing $\delta$ to be sufficiently large that the preconditioner is equal to the identity (up to a constant, which is absorbed into the learning rate). 

Finally, as most current deep learning frameworks don't straightforwardly support access to per-example gradients, which can in principle be achieved with negilible additional cost (see, for example, BackPACK \cite{dangel2020backpack} second-order Pytorch extensions), for simplicity and efficiency, we use the gradient magnitude (GM) approximation \citep{bottou_2018}, as used in standard optimizers Adam and RMSprop, replacing the sum of squared gradients with the square of summed gradients:
\begin{equation}
    \frac{1}{n} \sum_{i=1}^n  \left[ \nabla_{w} \log p(y_i|x_i,w) \right] ^2 \approx  \left[  \frac{1}{n}\sum_{i=1}^n   \nabla_{w} \log p(y_i|x_i,w) \right] ^2
\end{equation}Writing the resulting FIM diagonal as $(\nabla_w L(w))^2$, we finally end up with the following simple update equations:
\begin{equation}
\label{eqn:update_eqns}
\begin{split}
 w_{t+1} = w_t -
  \alpha_t \nabla_{w} \left[L(w_t)  +   \rho\left\langle \left(\nabla_w L(w_t)\right)^2  , \overline{f}^{-1}_{t} \right\rangle \right] \\
    \overline{f}_{t+1} = (1-\beta)  \overline{f}_t + \beta \left(\nabla_w L(w_t)\right)^2 
  \end{split} 
 \end{equation}
which are summarized in Algorithm \ref{alg:ftac}.  We show in appendix \ref{convergence_analysis} that the algorithm converges to a neighborhood of a local minimum of $L(w)$ of size $\mathcal{O}(\rho^2)$.  We note in passing that, in this simplest form (after applying the gradient magnitude approximation), the update equations amount to regularizing with a (scale-adjusted) gradient norm.  In principle (particularly for the large batch case) we would expect to see significant improvements by moving to per-gradient calculations (which are in principle no more expensive but require additional work in most current ML frameworks).

\subsection{Results}
We first examine a challenging variant on a standard benchmark in computer vision, CIFAR-100.  We compare SGD, SAM and SGD-Tracer using none of the standard regularizations (no data augmentation, no weight-decay) and a standard training protocol (200 epochs, initial learning rate set to $0.1$, cosine learning-rate decay).  Further, we randomly flip 50\% of the labels so that 50\% of examples are incorrectly labeled.  The results in Table \ref{tab:table1} show that GTRACER significantly improves on SAM in this challenging setting. In Figure \ref{fig:cifar100_50} we highlight results for the same problem over different values of the regularization parameter $\rho$.
{\begin{figure}[htb]
\centering
\includegraphics[width=0.7\textwidth]{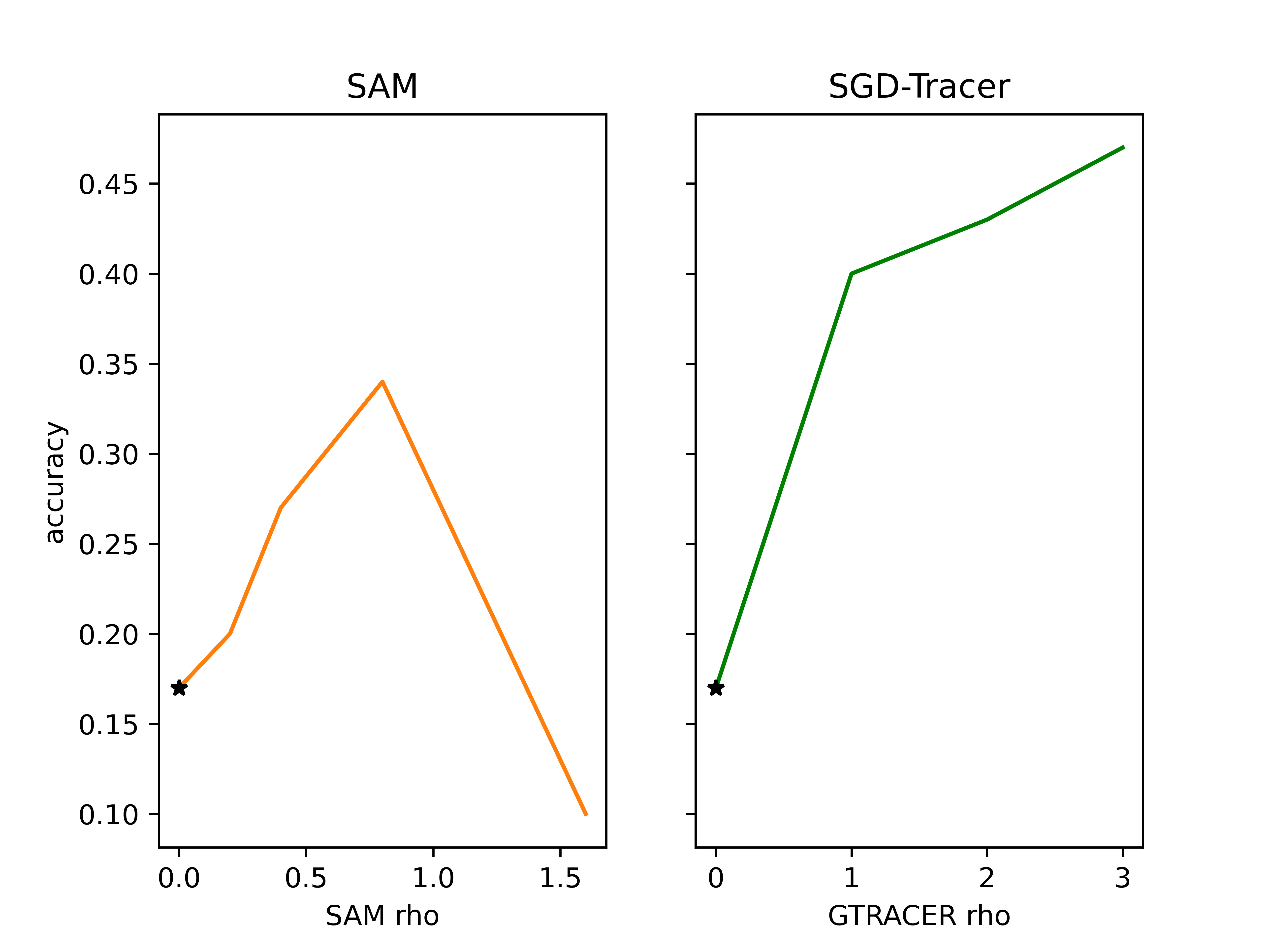}
\caption{CIFAR 100: ResNet20, no weight-decay, 50\% noise, accuracy vs regularization strength.  GTRACER dominates the baseline and SAM across a wide range of regularization strengths.}
\label{fig:cifar100_50}
\end{figure}}In Figure \ref{fig:cifar100_acc} we compare the training curves on this problem.
{\begin{figure}[htb]
\centering
\includegraphics[width=0.7\textwidth]{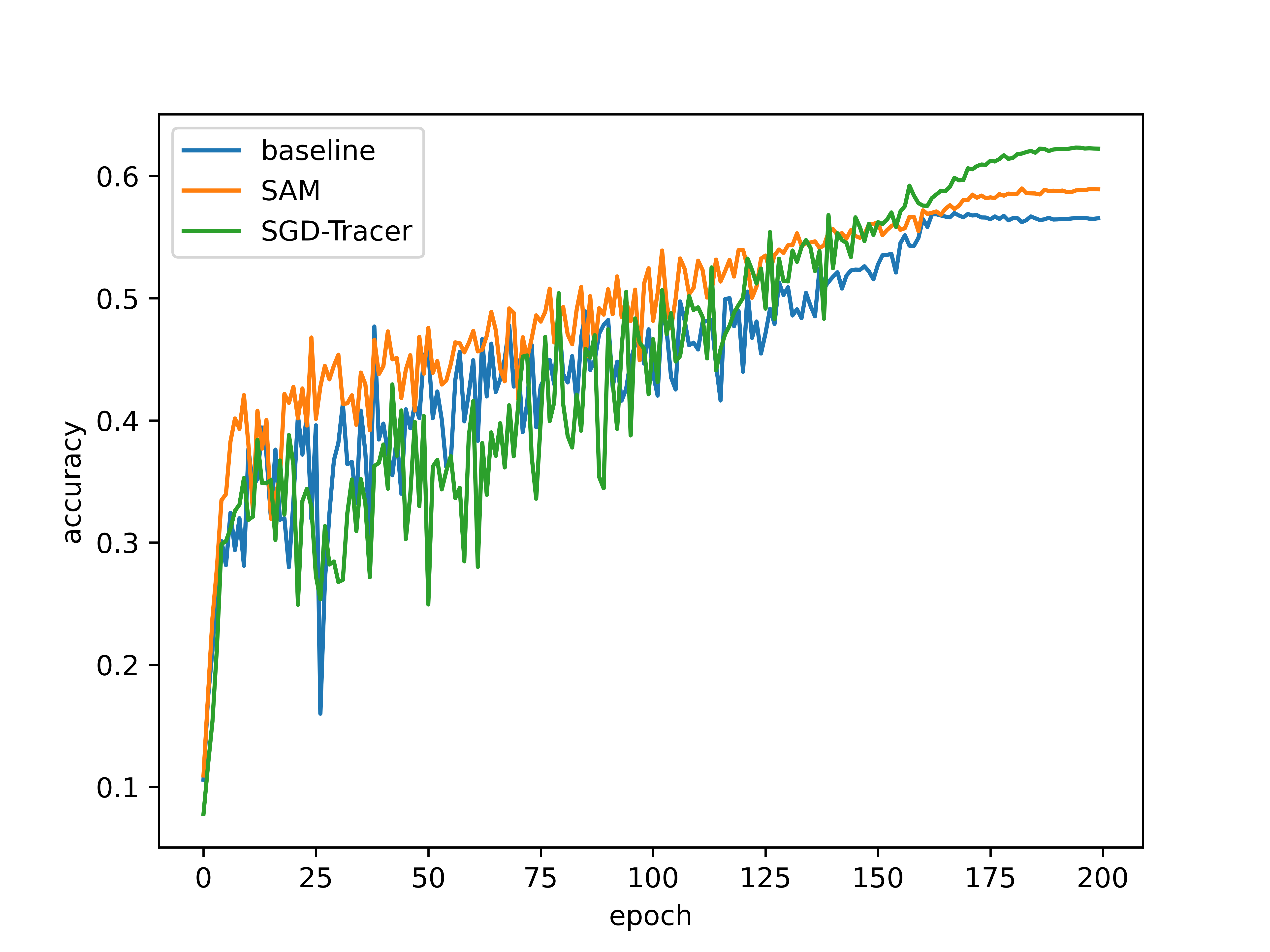}
\caption{CIFAR 100: ResNet20, 50\% noise, test-accuracy training curves.  On a standard 200 epoch training protocol with cosine learning-rate decay, SGD-Tracer converges to a solution that generalizes better than SGD and SAM}
\label{fig:cifar100_acc}
\end{figure}}

\begin{table}[h!]
  \begin{center}
    \caption{CIFAR 100: ResNet20, no weight-decay, 50\% noise, accuracy (standard error)}   
    \label{tab:table1}
    \begin{tabular}{l|r}
      \textbf{} & \textbf{No aug}\\ % <-- added & and content for each column
      \hline
      SGD & 17.5\% (2.41)\\ % <--
      SAM & 34.63\% (1.85)  \\ % <--
      SGD-TRACER & \textbf{47.55\%} (1.51) \\ % <--
    \end{tabular}
  \end{center}
\end{table}
We next run SGD-Tracer on CIFAR-100 with and without label noise, with and without augmentation, and with random label flipping, with a standard ridge penalty $5 \times 10^{-4}$.  The results in Table \ref{tab:table2} show that SGD-TRACER performs consistently well, with a particularly strong advantage in the the presence of noise and/or without additional regularization in the form of data augmentation.  
\begin{table}[h!]
  \begin{center}
    \caption{CIFAR-100: ResNet20, accuracy (standard error)}
    \label{tab:table2}
    \begin{tabular}{l|c|r|l}
      \textbf{} & \textbf{no aug } & \textbf{with aug} & \textbf{50\% noise \& no aug}\\ % <-- added & and content for each column
      \hline
      SGD & 51.43 \% (0.41) & 70.02\% (0.36) & 21.96\% (0.36)\\ 
      SAM & 58.98 \% (0.52) & 70.33\% (0.22) & 49.89\% (0.32)\\ 
      SGD-TRACER & \textbf{63.47\%} (0.32) & \textbf{70.71\%} (0.36)& \textbf{51.62\%} (0.18)\\ 
    \end{tabular}
  \end{center}
\end{table}
For NLP tasks we use the Huggingface Bert-base-uncased checkpoint together with Adam-TRACER.  We fine-tune using Adam-Tracer, using a standard protocol of 5 epochs with initial learning rate $2 \times 10^{-5}$.  Each run is repeated 20 times.  Performance is uniformly strong across the 3 benchmark tasks (taken from the challenging SuperGlue benchmark), and Adam-TRACER has the additional property of producing more stable results across runs (as reflected in the standard errors).  See the SM for details of (standard) experiment hyperparameters.
\begin{table}[h!]
  \begin{center}
    \caption{SupeGlue tasks BERT base-uncased results, accuracy (standard error)}
    \label{tab:table1}
    \begin{tabular}{l|c|r|l}
      \textbf{} & \textbf{BOOLQ} & \textbf{WIC} & \textbf{\textbf{RTE}}\\ % <-- added & and content for each column
      \hline
      Adam & 73.84\% (0.14) & 69.36\% (0.08) & 69.18\% (0.33)\\ % <--
      SAM & 73.95\%  (0.13) & 69.06\% (0.07) & 69.54\% (0.28)\\ % <--
      Adam-TRACER & \textbf{75.09\%} (0.04) & \textbf{70.01\%} (0.06) & \textbf{70.13\%} (0.18)\\ % <--
    \end{tabular}
  \end{center}
\end{table}

 \section{Conclusion}
Motivated by the notable empirical success of SAM, a prior that flat (in expectation, and in an intrinsic, geometric sense) minima should generalize better than sharp minima, and noting the connections between the generalized Bayes objective and SAM, we have derived a new algorithm that is simple to implement and understand, cheap to evaluate, provably convergent, naturally scale-independent (and approximately coordinate-free) and which is competitive with SAM on key benchmark problems.  Performance is particularly strong for challenging low signal-to-noise ratio and large batch problems.  Crucially the algorithm is straightforwardly derived from an approximate natural gradient optimization of an ELBO-type objective and doesn't rely on "m-sharpness" \citep{foret20} or other poorly understood (and expensive to compute) heuristics.

\subsubsection*{Broader Impact Statement}
We present a novel method with sound theoretical motivation which delivers competitive results on challenging benchmark and low signal-to-noise-ratio problems in vision and NLP.

\appendix
\section{Appendix}
\subsection{Multivariate Gaussian Fisher Information Matrix, $[\mu, \mathrm{vec}(\Lambda)]^T$ parameterization}

For a probability distribution with density $q$ with parameters $\phi$, the Fisher Information Matrix (FIM) can be written as the expected negative log-likelihood Hessian:
\begin{equation}
    F = \mathbb{E}_{q} \left[-\nabla_{\phi}^2 \log q \right]
\end{equation}In particular, for a multivariate Gaussian with pdf: $q(x) \sim N(\mu, \Lambda^{-1})$, parameterized by $\phi = \begin{bmatrix} \mu \\ \text{vec}(\Lambda) \end{bmatrix}$ the negative log-likelihood is, up to constant terms:
\begin{equation}
-\log q(x) = \frac{1}{2}(x-\mu)^T \Lambda(x-\mu) + \frac{1}{2} \log |\Lambda^{-1}| 
\end{equation}Taking gradients w.r.t. $\mu$, we have: $- \nabla_{\mu} \log q(x)= \Lambda(x-\mu) $ and therefore $\mathbb{E}\left[\nabla^2_{\mu}q(y) \right] = \Lambda $.  Taking gradients w.r.t. the covariance, and using since $\nabla_{\Lambda} (x-\mu)^{T}\Lambda (x-\mu) = (x-\mu) (x-\mu)^T$ and $\nabla_{\Lambda} \log |\Lambda^{-1}| = \nabla_{\Lambda} \log |\Lambda|^{-1} = -\nabla_{\Lambda} \log |\Lambda| =  -(\Lambda^T)^{-1} = -(\Lambda)^{-1}$ we have:
\begin{equation}
\begin{split}
\nabla_{\Lambda}  q(x) = \frac{1}{2} (x-\mu) (x-\mu)^T - \frac{1}{2} \Lambda^{-1}
\end{split}
\end{equation}Finally, writing  $\nabla_{\Lambda} \Lambda^{-1}$ as $-\Lambda\otimes \Lambda$ and $\Lambda^{-1}:= \Sigma$ we have:\begin{equation}
\begin{split}
\nabla^2_{\Lambda}  q(x) = \frac{1}{2} \Sigma \otimes \Sigma
\end{split}
\end{equation}so that the FIM is given by:
\begin{equation}
\label{eqn:FIM}
   F = \mathbb{E}_{q} \left[-\nabla_{\phi}^2 \log q \right]  = \begin{bmatrix} \Sigma^{-1} & 0 \\ 0 & \frac{1}{2} \Sigma \otimes \Sigma  \end{bmatrix}
\end{equation}
\subsection{Approximate expected Hessian}
\begin{lemma}
To second order, we can approximate the expected Hessian w.r.t. a multivariate Gaussian with pdf: $q(x) \sim N(\mu, \Lambda^{-1})$ by its value at the mean:
\begin{equation}
\mathbb{E}_q[\nabla_{w}^2 L(w)] \approx \nabla_{w}^2 L(w) |_{w=\mu}
\end{equation}\end{lemma}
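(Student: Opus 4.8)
The plan is to regard $w \mapsto \nabla_w^2 L(w)$ as a smooth matrix-valued function of $w$, Taylor-expand it about the mean $\mu$, integrate against $q$, and exploit the fact that the odd-order fluctuation terms of a centred Gaussian vanish. Concretely, I would write $w = \mu + \delta$ with $\delta \sim N(0,\Sigma)$ and $\Sigma := \Lambda^{-1}$, and expand entrywise
\[
[\nabla_w^2 L(w)]_{ab} = [\nabla_w^2 L(\mu)]_{ab} + \sum_c \delta_c\, \partial_{abc} L(\mu) + \tfrac12 \sum_{c,d} \delta_c \delta_d\, \partial_{abcd} L(\mu) + O(\|\delta\|^3),
\]
where $\partial_{abc}L$ and $\partial_{abcd}L$ denote components of the third- and fourth-derivative tensors of $L$ at $\mu$.

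Next I would take the expectation of this identity under $q$, entrywise. Since $\mathbb{E}_q[\delta] = 0$ the first-order (third-derivative) term drops out identically, and with $\mathbb{E}_q[\delta_c \delta_d] = \Sigma_{cd}$ one is left with
\[
[\mathbb{E}_q[\nabla_w^2 L(w)]]_{ab} = [\nabla_w^2 L(\mu)]_{ab} + \tfrac12 \sum_{c,d} \Sigma_{cd}\, \partial_{abcd}L(\mu) + \mathbb{E}_q\!\left[O(\|\delta\|^3)\right].
\]
The remaining step is to argue that the correction is of exactly the order that is already being neglected elsewhere in the derivation: $\mathbb{E}_q[\|\delta\|^2] = \Tr\Sigma$ and $\mathbb{E}_q[\|\delta\|^3] = O((\Tr\Sigma)^{3/2})$, so under the standing smoothness assumption on $L$ (bounded third and fourth derivatives near $\mu$) the error is $O(\Tr\Sigma)$, the same order as terms dropped when we wrote $\mathbb{E}_q[L(w)] \approx L(\mu) + \tfrac12\Tr(\Sigma \nabla_w^2 L(\mu))$. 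In particular the approximation is \emph{exact} whenever $L$ is a polynomial of degree at most three (then $\nabla_w^2 L$ is affine in $w$, so its $q$-average equals its value at $\mu$), and "to second order" is precisely the statement that we retain only the $\Sigma$-independent term and discard the first nonvanishing correction, which enters at fourth order in $L$.

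An equivalent and slightly slicker presentation, matching the "Price's theorem plus Taylor expansion" remark in the main text, is to apply Price's theorem componentwise to $g := \nabla_w^2 L$, namely $\nabla_\Sigma \mathbb{E}_{N(\mu,\Sigma)}[g(w)] = \tfrac12\,\mathbb{E}_{N(\mu,\Sigma)}[\nabla_w^2 g(w)]$, and integrate along the segment from $\Sigma = 0$ to $\Sigma$; at $\Sigma = 0$ the expectation equals $g(\mu) = \nabla_w^2 L(\mu)$, and the integrated remainder is $\tfrac12\,\mathbb{E}[\nabla_w^4 L]$ contracted with $\Sigma$, again $O(\Tr\Sigma)$. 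I expect the only genuine obstacle to be bookkeeping rather than depth: one must check that the Taylor remainder for a \emph{matrix}-valued map is controlled uniformly (a mild hypothesis on $\nabla^3 L$ and $\nabla^4 L$), and one should state plainly that the conclusion is an asymptotic ($\Sigma$ small) approximation rather than an identity, consistent with the other second-order approximations in Section~\ref{section:derivation}.
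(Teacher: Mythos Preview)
Your argument is correct, but it is not the route the paper takes. You Taylor-expand the matrix-valued map $w\mapsto\nabla_w^2 L(w)$ directly about $\mu$ and use $\mathbb{E}_q[\delta]=0$ to kill the first correction; this is elementary and needs nothing beyond moments of a centred Gaussian. The paper instead applies Price's theorem to the \emph{scalar} $L$, writing $\mathbb{E}_q[\nabla_w^2 L(w)] = 2\,\nabla_{\Sigma}\,\mathbb{E}_q[L(w)]$, then substitutes the same second-order expansion $\mathbb{E}_q[L(w)]\approx L(\mu)+\tfrac12\Tr(\Sigma H(\mu))$ already used for the mean update, and differentiates in $\Sigma$ to recover $H(\mu)$. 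Your ``slicker'' alternative (Price applied componentwise to $g=\nabla_w^2 L$ and integrating in $\Sigma$) is a third variant, closer in spirit to the paper but still not the same: the paper never differentiates the Hessian, only the loss.

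What each buys: your direct expansion is shorter, self-contained, and makes the order of the neglected term ($O(\Tr\Sigma)$, entering via the fourth derivatives of $L$) completely explicit, which the paper's proof does not. The paper's route has the virtue of internal consistency---it reuses the Bonnet/Price machinery and the very same quadratic approximation of $\mathbb{E}_q[L]$ that drives the rest of Section~\ref{section:derivation}, so ``to second order'' means the same thing throughout. Either argument is adequate for the lemma as stated.
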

\begin{proof}
Following \cite{khan_blr}, by Price's theorem, we have:
\begin{equation}
\mathbb{E}_q[\nabla_{w}^2 L(w)] = 2 \nabla^2_{\Lambda^{-1}} \mathbb{E}_q[L(w)]
\end{equation}
which is equal to, expanding the r.h.s. to second order using a Taylor series:
\begin{equation}
2 \nabla^2_{\Lambda^{-1}}\mathbb{E}_{q}[(w-\mu)^T \nabla^2_{w}L(w) |_{w=\mu} (w-\mu)]
\end{equation}Finally, noting that $\mathbb{E}_{q}[(w-\mu)^T \nabla^2_{w}L(w) |_{w=\mu} (w-\mu)] = \text{Tr}\left[\frac{1}{2}\Lambda^{-1} \nabla^2_{w}L(w) |_{w=\mu}\right]$, we have, to second order:
\begin{equation}
\mathbb{E}_q[\nabla_{w}^2 L(w)] \approx  2 \nabla^2_{\Lambda^{-1}} \text{Tr}\left[\frac{1}{2}\Lambda^{-1} \nabla^2_{w}L(w) |_{w=\mu}\right] = \nabla_{w}^2 L(w) |_{w=\mu}
\end{equation}\end{proof}

\subsection{Convergence analysis}
\label{convergence_analysis}
With $T(w_t) := \left\langle \left(\nabla_w L(w_t)\right)^2,{(\overline{f} + \delta)}^{-1}_{t} \right\rangle$, as $\rho \xrightarrow{} 0$, the iterates $ w_{t+1} = w_t -\alpha_t \nabla_{w} \left[L(w_t)  +   \rho \nabla_w T(w_t) \right]$ will converge to those of SGD.  For $\rho > 0$, the algorithm is biased away from a pure descent direction, and convergence then depends on the magnitude of $\rho$.  The key assumption in the following convergence proof is $\| \rho \nabla_w T(w_t)\|_2^2  \leq  \kappa \|\nabla_w L(w_t) \|_2^2 + \zeta$ , which controls the bias and which follows from the standard assumption of twice-differentiability of $L(w)$ and the Lipschitz continuity of $\nabla_w L(w_t)$ which imply that the Hessian has a bounded spectral norm:
\begin{equation}
\begin{split}   
   \| \rho \nabla_w T(w_t)\|_2^2 
   \leq 4\rho^2 \| \nabla_w^2 L(w_t) \|^2_2  \|  {(\overline{f} + \delta)}^{-1}_{t}\|_2^2 \\
   \leq  4 \left(\frac{\rho}{\delta}\right)^2 C^2 p\\
\end{split}
\end{equation}
so that $\zeta$ depends on the Lipshitz constant $C$ and the ratio $\frac{\rho}{\delta}$.
\begin{theorem}
Let $T(w_t) := \left\langle \left(\nabla_w L(w_t)\right)^2  , \overline{f}^{-1}_{t} \right\rangle$, and assume the objective (loss)  $L:\mathbb{R}^p \xrightarrow{} \mathbb{R}$ is Lipschitz continuous, twice differentiable, and has Lipshitz-continuous gradient.  Let us assume, following \cite{bottou_2018} and \cite{ajalloeian2021convergence} that we have a stochastic direction $g(w_t, \xi_t)$ which has the following properties, $\forall t$:
\begin{equation}
    \mathbb{E} \left[ g(w_t, \xi_t) \right] = \nabla_w L + \rho \nabla_w T(w_t)
\end{equation}
and further assuming that there exist $M$, $M_G$ such that, $\forall t$,  
\begin{equation}
    \mathbb{E} \left[ \| g(w_t, \xi_t) \|^2 \right] \leq M + M_G \|\nabla_w L + \rho \nabla_w T(w_t) \|^2
\end{equation}
and the following bound on the bias:
\begin{equation}
\| \rho \nabla_w T(w_t)\|^2  \leq  \kappa \|\nabla_w L(w_t) \|_2^2 + \zeta
\end{equation}
then the iteration:
\begin{equation}
\label{eqn:update_eqns}
\begin{split}
 w_{t+1} = w_t -
  \alpha_t \nabla_w \left[L(w_t)  +   \rho \nabla_w T(w_t) \right] \\
    \overline{f}_{t+1} = (1-\beta)  \overline{f}_t + \beta \left(\nabla_w L(w_t)\right)^2 
  \end{split} 
 \end{equation}
 converges to a neighbourhood of a stationary point with $\| \nabla L(w)\|_2^2 = \mathcal{O}(\zeta)$. 
\end{theorem}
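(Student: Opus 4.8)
The plan is to run the standard descent-lemma argument for biased stochastic gradient methods (as in \cite{bottou_2018, ajalloeian2021convergence}), treating $g(w_t,\xi_t)$ as an unbiased estimator of the \emph{perturbed} gradient $\nabla_w L(w_t) + \rho\nabla_w T(w_t)$ and absorbing the perturbation into an additive bias that is controlled by the hypothesis $\|\rho\nabla_w T(w_t)\|^2 \le \kappa\|\nabla_w L(w_t)\|_2^2 + \zeta$. Note that the $\overline f_t$ recursion enters the analysis only through this bound, which is assumed to hold along the whole trajectory; this is precisely why it is stated as a hypothesis rather than re-derived inside the proof --- the computation just above the theorem already establishes it with $\kappa = 0$ and $\zeta = 4(\rho/\delta)^2 C^2 p = \mathcal{O}(\rho^2)$, so the two update equations decouple for the purposes of the argument.

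First I would use $C$-smoothness of $L$ (the Lipschitz-gradient hypothesis) together with $w_{t+1} - w_t = -\alpha_t g(w_t,\xi_t)$ to write the descent inequality
\begin{equation}
L(w_{t+1}) \le L(w_t) - \alpha_t \nabla_w L(w_t)^T g(w_t,\xi_t) + \tfrac{C\alpha_t^2}{2}\,\|g(w_t,\xi_t)\|^2 ,
\end{equation}
and then take the conditional expectation given $w_t$. The first-order term becomes $-\alpha_t\big(\|\nabla_w L(w_t)\|^2 + \rho\,\nabla_w L(w_t)^T\nabla_w T(w_t)\big)$; I bound the cross term below by Young's inequality, $\rho\,\nabla_w L(w_t)^T\nabla_w T(w_t) \ge -\tfrac12\|\nabla_w L(w_t)\|^2 - \tfrac12\|\rho\nabla_w T(w_t)\|^2$, and then apply the bias bound. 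The second-moment term is controlled by the hypothesis $\mathbb{E}[\|g\|^2 \mid w_t] \le M + M_G\|\nabla_w L(w_t) + \rho\nabla_w T(w_t)\|^2 \le M + 2M_G\|\nabla_w L(w_t)\|^2 + 2M_G\|\rho\nabla_w T(w_t)\|^2$, again followed by the bias bound. Collecting terms yields a recursion of the form $\mathbb{E}[L(w_{t+1}) \mid w_t] \le L(w_t) - \alpha_t\big(\tfrac{1-\kappa}{2} - C\alpha_t M_G(1+\kappa)\big)\|\nabla_w L(w_t)\|^2 + \tfrac{\alpha_t\zeta}{2} + \tfrac{C\alpha_t^2}{2}(M + 2M_G\zeta)$.

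Next I would pick the step size small enough (e.g.\ constant $\alpha \le \tfrac{1-\kappa}{4CM_G(1+\kappa)}$, which also makes the implicit requirement $\kappa<1$ explicit) so that the coefficient of $\|\nabla_w L(w_t)\|^2$ is bounded below by some $c>0$, take total expectations, telescope over $t = 0,\dots,T-1$, and use $L(w_T) \ge \inf_w L(w) =: L_\star > -\infty$ to obtain
\begin{equation}
\frac{1}{T}\sum_{t=0}^{T-1}\mathbb{E}\big[\|\nabla_w L(w_t)\|_2^2\big] \;\le\; \frac{L(w_0) - L_\star}{c\,\alpha\,T} \;+\; \frac{1}{c}\Big(\tfrac{\zeta}{2} + \tfrac{C\alpha}{2}(M + 2M_G\zeta)\Big).
\end{equation}
Sending $T\to\infty$ kills the first term and traps the running average --- hence $\liminf_t \mathbb{E}[\|\nabla_w L(w_t)\|_2^2]$ --- inside a ball of radius $\mathcal{O}(\zeta) + \mathcal{O}(\alpha M)$; choosing $\alpha$ proportional to $\zeta$, or instead a diminishing schedule with $\sum_t \alpha_t = \infty$ and $\sum_t \alpha_t^2 < \infty$ (in which case the $\alpha$-term vanishes outright), gives $\liminf_t \mathbb{E}[\|\nabla_w L(w_t)\|_2^2] = \mathcal{O}(\zeta)$, and since $\zeta = \mathcal{O}(\rho^2)$ this recovers the advertised $\mathcal{O}(\rho^2)$ neighborhood.

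The main obstacle is bookkeeping rather than a genuine difficulty: one must keep the bias from swamping the descent term --- which is exactly where $\kappa<1$ is used --- and must remember that the bias bound is invoked \emph{twice}, once in the first-order term and once inside the second-moment estimate, so the effective constants multiplying both $\|\nabla_w L\|^2$ and $\zeta$ pick up factors of $1+\kappa$ and $M_G$ that have to be tracked when choosing $\alpha$. The only genuinely extra hypothesis beyond what is stated is that $L$ is bounded below (Lipschitz continuity on $\mathbb{R}^p$ alone does not guarantee this), which the telescoping step needs and which holds for the losses of interest.
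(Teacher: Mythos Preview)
Your proposal is correct and follows essentially the same route as the paper: descent lemma from $C$-smoothness, conditional expectation, Young's inequality on the cross term $\rho\,\nabla_w L^T\nabla_w T$, the bias bound $\|\rho\nabla_w T\|^2\le\kappa\|\nabla_w L\|^2+\zeta$, and telescoping against $L_\star$. The only cosmetic difference is that the paper expands $\|\nabla_w L+\rho\nabla_w T\|^2$ exactly before applying Young (so the cross term from the second-moment bound partially cancels with the first-order cross term), which yields slightly tighter constants and the simpler step-size condition $\alpha<1/(CM_G)$, whereas your use of $\|a+b\|^2\le 2\|a\|^2+2\|b\|^2$ picks up extra $(1+\kappa)$ and $M_G\zeta$ factors; but this is bookkeeping, not a different idea, and your explicit remarks on $\kappa<1$, the need for $L$ to be bounded below, and the diminishing-step-size option are useful clarifications the paper leaves implicit.
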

\begin{proof}
By the Lipschitz continuity of the objective function we have the quadratic bound:
\begin{equation}
    L(y) \leq L(x) + \langle \nabla_w L(x), y-x \rangle + \frac{C}{2} \|y-x\|^2
\end{equation}By the quadratic upper bound,  the iterates generated by the algorithm satisfy:
\begin{equation}
	\begin{split}
	L(w_{t+1}) - L(w_t) \leq  - \alpha_t  \langle \nabla_w L(w_t) , g(w_k, \xi_k) \rangle + \frac{1}{2}\alpha_t^2 C \|g(w_k, \xi_k) \|_2^2 \\
	\end{split}
\end{equation}Taking expectations and applying the variance bound we have:
\begin{equation}
	\begin{split}
	\mathbb{E}  L(w_{t+1}) - L(w_t) \leq  - \alpha_t  \|\nabla_w L(w_t) \|^2    - \alpha_t \rho \nabla_w L(w_t)^T \nabla_w T(w_t) + \frac{1}{2} \alpha_t^2 C \mathbb{E} \left[ \|g(w_k, \xi_k) \|_2^2 \right]\\
    =  - \alpha_t  \|\nabla_w L(w_t) \|^2    - \alpha_t \rho \nabla_w L(w_t)^T \nabla_w T(w_t)  + \frac{1}{2} \alpha_t^2 C \left[M + M_G \|\nabla_w L(x)  + \rho \nabla_w T(w_t)  \|_2^2  \right]\\
    =  - \alpha_t  \|\nabla_w L(w_t) \|^2    - \alpha_t (1 - \alpha C M_G) \rho \nabla_w L(w_t)^T \nabla_w T(w_t)  + \frac{1}{2} \alpha_t^2 C M +  \frac{1}{2} \alpha_t^2 C M_G \left( \|\nabla_w L(x) \|_2^2 + \rho \| \nabla_w T(w_t)  \|_2^2  \right)\\
	\end{split}
\end{equation}So that, choosing $\alpha_t < \frac{1}{C M_G}$ and applying the bound on $\| \nabla_w T(w_t) \|$ we have:
\begin{equation}
\label{eqn:pre_sum_iter}
	\begin{split}
	\mathbb{E} L (w_{t+1}) - L(w_t) \leq   -\frac{1}{2}\alpha_t  \|\nabla_w L(w_t) \|^2  +  \frac{1}{2} \alpha_t^2 C M + \frac{1}{2} \alpha_t \| \rho  \nabla_w  T(w_t)  \|_2^2\\
\leq   -\frac{1}{2}\alpha_t  (1-\kappa)\|\nabla_w L(w_t) \|^2  +  \frac{1}{2} \alpha_t^2 C M
+ \frac{\alpha_t}{2} \zeta
 \end{split}
\end{equation}Taking the total expectation, for a fixed $\alpha$, we then have:
\begin{equation}
	\begin{split}
	L_{inf} - L(w_1) \leq \mathbb{E}\left[L (w_{K+1})\right] - L(w_1) \leq   -\frac{1}{2}\alpha  (1-\kappa) \sum_{t=1}^K \|\nabla_w L(w_t) \|^2  +  \frac{1}{2} K \alpha^2 C M  +  \frac{K \alpha}{2} \zeta\\
 \end{split}
\end{equation}Finally giving:
\begin{equation}
	\begin{split}
\frac{1}{K}  \sum_{t=1}^K \|\nabla_w L(w_t) \|^2 = \frac{\alpha C M}{1-\kappa} + 2 \frac{F(w_1) - F_{inf}}{K \alpha (1-\kappa)}  \xrightarrow{K \to \infty} \frac{\alpha C M}{1-\kappa} + \frac{\zeta}{1-\kappa}
 \end{split}
\end{equation}
\end{proof}

\subsection{Objective function gradient}
\begin{lemma}
\label{elbo_grad_lemma}
The gradient of the objective \ref{eqn:objective} towards $\phi' =\begin{bmatrix} \mu \\ \mathrm{vec}(\Sigma) \end{bmatrix}$ is given by:
\begin{equation}
\nabla_{\mu}\mathcal{L} =\mathbb{E}_q[\nabla_{w} L(w) - \rho \nabla_{w} \log p(w)]
\end{equation}\begin{equation}
\nabla_{\Sigma} \mathcal{L} = \frac{1}{2}\mathbb{E}_q[\nabla_{w}^2 L(w) - \rho \nabla_{w}^2 \log p(w)] - \frac{\rho}{2} \Sigma^{-1}
\end{equation}\end{lemma}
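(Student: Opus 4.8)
The plan is to reduce the computation to two standard identities for Gaussian expectations — Bonnet's theorem and Price's theorem, exactly as invoked earlier in the derivation following \cite{khan_blr} — together with the closed form of the differential entropy of a Gaussian. Writing $q = q_\phi = N(\mu,\Sigma)$ and using $\mathbb{D}_{KL}[q,p] = \mathbb{E}_q[\log q(w)] - \mathbb{E}_q[\log p(w)] = -\mathcal{H}(q) - \mathbb{E}_q[\log p(w)]$, I would first decompose the objective into three pieces that can be differentiated independently:
\begin{equation}
\mathcal{L}(\phi) = \mathbb{E}_q[L(w)] - \rho\,\mathbb{E}_q[\log p(w)] - \rho\,\mathcal{H}(q).
\end{equation}

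For the first two (expectation) pieces, Bonnet's theorem gives $\nabla_\mu \mathbb{E}_{N(\mu,\Sigma)}[f(w)] = \mathbb{E}_{N(\mu,\Sigma)}[\nabla_w f(w)]$ and Price's theorem gives $\nabla_\Sigma \mathbb{E}_{N(\mu,\Sigma)}[f(w)] = \tfrac12\,\mathbb{E}_{N(\mu,\Sigma)}[\nabla_w^2 f(w)]$; both follow by differentiating the Gaussian density under the integral sign and integrating by parts, the twice-differentiability of $L$ (and the fact that $\log p$ is quadratic for a Gaussian prior) justifying the interchange. Applying these with $f = L$ and with $f = \log p$ produces the $\mathbb{E}_q[\nabla_w L(w)]$, $\tfrac12\mathbb{E}_q[\nabla_w^2 L(w)]$, $\mathbb{E}_q[\nabla_w\log p(w)]$ and $\tfrac12\mathbb{E}_q[\nabla_w^2\log p(w)]$ contributions. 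For the entropy piece I would use $\mathcal{H}(q) = \tfrac12\log\det(2\pi e\,\Sigma) = \tfrac12\log\det\Sigma + \text{const}$, which is independent of $\mu$ (hence contributes nothing to $\nabla_\mu\mathcal{L}$) and, via $\nabla_\Sigma\log\det\Sigma = \Sigma^{-1}$ for symmetric positive-definite $\Sigma$, has $\Sigma$-gradient $\tfrac12\Sigma^{-1}$.

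Collecting the pieces gives $\nabla_\mu\mathcal{L} = \mathbb{E}_q[\nabla_w L(w) - \rho\nabla_w\log p(w)]$ and $\nabla_\Sigma\mathcal{L} = \tfrac12\mathbb{E}_q[\nabla_w^2 L(w) - \rho\nabla_w^2\log p(w)] - \tfrac{\rho}{2}\Sigma^{-1}$, which are precisely the claimed formulas. The only point requiring care — and the part I expect to be the main (minor) obstacle — is the matrix calculus for the $\Sigma$-gradient: both Price's theorem and the $\log\det$ derivative must be applied treating $\Sigma$ consistently as a symmetric matrix (otherwise off-diagonal entries pick up spurious factors of two), and one must verify that $\nabla_\Sigma$ commutes with the Gaussian expectation, which is where the regularity and growth assumptions on $L$ enter. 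Everything else is routine bookkeeping.
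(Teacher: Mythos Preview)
Your proposal is correct and uses the same core tools as the paper (Bonnet's and Price's theorems), but you handle the $q$-dependent piece of the KL term differently. The paper writes $\mathbb{D}_{KL}[q,p]=\mathbb{E}_q[\log q(w)-\log p(w)]$ and applies Bonnet/Price uniformly to all three integrands $L$, $\log q$, $\log p$; for the $\mu$-gradient it then invokes ``the expectation of the score is $0$'' to kill the $\log q$ contribution, and for the $\Sigma$-gradient it computes $\mathbb{E}_q[\nabla_w^2\log q(w)]=-\Sigma^{-1}$ directly. You instead isolate $-\rho\,\mathcal{H}(q)$ up front and differentiate its closed form $\tfrac12\log\det\Sigma+\text{const}$. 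Your route is slightly cleaner, because Bonnet's and Price's theorems as usually stated assume the integrand does not itself depend on $(\mu,\Sigma)$, which $\log q_{\mu,\Sigma}(w)$ does; the paper's application to $\log q$ happens to give the right answer (the extra $\nabla_\mu\log q$ and $\nabla_\Sigma\log q$ contributions vanish in expectation), but this is left implicit. The paper's approach, on the other hand, is more uniform and avoids appealing to the explicit Gaussian entropy formula. Both arrive at identical expressions with the same amount of work.
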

\begin{proof}
Taking the negative gradient of the objective wrt to $\mu$, and applying Bonnet's theorem \citep{khan_blr}, and the fact that the expectation of the score is 0, we have:
\begin{equation}
\nabla_{\mu} \left( \mathbb{E}_{q}[L(w)]  + \rho \mathbb{D}_{KL}[q(w),p(w)]\right) = \mathbb{E}_{q}[\nabla_{w}  L(w)]  - \rho \mathbb{E}_{q} \left[ \nabla_w \log p(w) \right]
\end{equation}Taking the gradient w.r.t. $\Sigma$, applying Price's theorem, we have:
\begin{equation}
\nabla_{\Sigma} \left( \mathbb{E}_{q}[L(w)]  + \rho \mathbb{D}_{KL}[q(w),p(w)]\right) = \frac{1}{2} \mathbb{E}_q \left[\nabla_w^2 L(w)  + \rho \nabla_w^2 \log q(w) - \rho \nabla_w^2 \log p(w) \right] 
\end{equation}and since:\begin{equation}
\begin{split}
\mathbb{E}_q \left[\nabla_w^2 \log q(w)\right]= -\frac{1}{2}  \mathbb{E}_q\left[ \nabla_w^2  \left( \log |\Sigma| + (w-\mu)^{T} \Sigma^{-1} (w-\mu)  \right) \right] 
=-\Sigma^{-1}
\end{split}
\end{equation}
We obtain 
\begin{equation}
\nabla_{\mu}\mathcal{L} =\mathbb{E}_q[\nabla_{w} L(w) - \rho \nabla_{w} \log p(w)]
\end{equation}
\begin{equation}
\nabla_{\Sigma} \mathcal{L} = \frac{1}{2}\mathbb{E}_q[\nabla_{w}^2 L(w) - \rho \nabla_{w}^2 \log p(w)] - \frac{\rho}{2} \Sigma^{-1}
\end{equation}
\end{proof}

\subsection{Objective function natural gradient}
\label{natural_gradient}

\begin{prop}
\begin{equation}
\tilde{\nabla}_{\mu}\mathcal{L} = \Sigma \mathbb{E}_q[\nabla_{w} L(w) + \rho \nabla_{w} p(w)]
\end{equation}
\begin{equation}
\tilde{\nabla}_{\Sigma^{-1}} \mathcal{L} = -\mathbb{E}_q[\nabla_{w}^2 L(w) - \rho \nabla_{w}^2 p(w)] + \rho \Sigma^{-1}
\end{equation}
\end{prop}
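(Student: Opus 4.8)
The plan is to obtain the natural gradient as the Euclidean gradient preconditioned by the inverse Fisher information matrix, $\tilde{\nabla}_{\phi}\mathcal{L} = F^{-1}\nabla_{\phi}\mathcal{L}$, and then to combine the two ingredients already assembled above: the closed form of $F$ in the $\phi = [\mu,\,\mathrm{vec}(\Lambda)]^{T}$ parameterization (Equation \ref{eqn:FIM}) and the Euclidean gradient of $\mathcal{L}$ (Lemma \ref{elbo_grad_lemma}).

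First I would use the block-diagonal form $F = \mathrm{diag}(\Sigma^{-1},\,\tfrac{1}{2}\Sigma\otimes\Sigma)$ from Equation \ref{eqn:FIM}, so that $F^{-1} = \mathrm{diag}(\Sigma,\,2\,\Sigma^{-1}\otimes\Sigma^{-1})$. The mean block gives directly $\tilde{\nabla}_{\mu}\mathcal{L} = \Sigma\,\nabla_{\mu}\mathcal{L}$, and substituting $\nabla_{\mu}\mathcal{L}$ from Lemma \ref{elbo_grad_lemma} yields the first claimed identity; this case is pure bookkeeping.

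The precision block needs more care, because Lemma \ref{elbo_grad_lemma} differentiates with respect to $\Sigma$ rather than $\Lambda = \Sigma^{-1}$. I would first pass from $\nabla_{\Sigma}\mathcal{L}$ to $\nabla_{\Lambda}\mathcal{L}$ using the differential identity $d(\Sigma^{-1}) = -\Sigma^{-1}(d\Sigma)\Sigma^{-1}$, which gives $\nabla_{\Lambda}\mathcal{L} = -\Sigma\,(\nabla_{\Sigma}\mathcal{L})\,\Sigma$. Then $\tilde{\nabla}_{\mathrm{vec}(\Lambda)}\mathcal{L} = 2\,(\Sigma^{-1}\otimes\Sigma^{-1})\,\mathrm{vec}(\nabla_{\Lambda}\mathcal{L})$, and applying the identity $\mathrm{vec}(AXB) = (B^{T}\otimes A)\,\mathrm{vec}(X)$ together with the symmetry of $\Sigma$ lets the Kronecker factors $(\Sigma^{-1}\otimes\Sigma^{-1})$ and $(\Sigma\otimes\Sigma)$ cancel, leaving $\tilde{\nabla}_{\Lambda}\mathcal{L} = -2\,\nabla_{\Sigma}\mathcal{L}$. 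Substituting $\nabla_{\Sigma}\mathcal{L} = \tfrac{1}{2}\mathbb{E}_q[\nabla_w^2 L(w) - \rho\nabla_w^2\log p(w)] - \tfrac{\rho}{2}\Sigma^{-1}$ from Lemma \ref{elbo_grad_lemma} then gives $\tilde{\nabla}_{\Sigma^{-1}}\mathcal{L} = -\mathbb{E}_q[\nabla_w^2 L(w) - \rho\nabla_w^2\log p(w)] + \rho\,\Sigma^{-1}$, the second claimed identity.

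The main obstacle I anticipate is the vectorization/Kronecker bookkeeping in the precision block: keeping the conventions for differentiating with respect to a symmetric matrix consistent throughout (so that no spurious factors of $2$ or off-diagonal corrections appear), and checking that the change of variables $\Sigma \leftrightarrow \Lambda$ passes correctly through the $\mathrm{vec}(\cdot)$ map with the right signs. Once these conventions are fixed the remaining steps are all standard matrix-calculus identities, and the cancellation of $(\Sigma^{-1}\otimes\Sigma^{-1})$ against $(\Sigma\otimes\Sigma)$ does essentially all the work.
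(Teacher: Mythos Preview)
Your proposal is correct and follows essentially the same route as the paper: both invoke Lemma~\ref{elbo_grad_lemma} for the Euclidean gradients, convert $\nabla_{\Sigma}\mathcal{L}$ to $\nabla_{\Lambda}\mathcal{L}$ via $\nabla_{\Lambda}\mathcal{L} = -\Sigma(\nabla_{\Sigma}\mathcal{L})\Sigma$, and then precondition by $F^{-1}$ using the Kronecker identity $(B^{T}\otimes A)\,\mathrm{vec}(X)=\mathrm{vec}(AXB)$. The only cosmetic difference is that you first isolate the clean cancellation $\tilde{\nabla}_{\Lambda}\mathcal{L} = -2\,\nabla_{\Sigma}\mathcal{L}$ before substituting, whereas the paper carries the full expression for $\nabla_{\Lambda}\mathcal{L}$ through and simplifies at the end.
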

\begin{proof}
By Lemma \ref{elbo_grad_lemma}, the gradients $\nabla_{\phi'}$ of the objective $\mathcal{L(\phi)}$ w.r.t. $\phi' =\begin{bmatrix} \mu \\ \text{vec}(\Sigma) \end{bmatrix}$ are given by:
\begin{equation}
\nabla_{\mu}\mathcal{L} = \mathbb{E}_q[\nabla_{w} L(w) - \rho \nabla_{w} \log p(w)]
\end{equation}and
\begin{equation}
\nabla_{\Sigma} \mathcal{L} = \frac{1}{2} \mathbb{E}_q[\nabla_{w}^2 L(w) - \rho \nabla_{w}^2 \log p(w)] - \frac{\rho}{2}\Sigma^{-1}
\end{equation}The gradient $\tilde{\nabla}_{\mu}\mathcal{L}$ then follows immediately from the definition of the natural gradient operator.  Using the chain rule for matrix derivatives we can also show that:
\begin{equation}
\nabla_{\Lambda} \mathcal{L} = -\Lambda^{-1} \nabla_{\Sigma} \Lambda^{-1}
\end{equation}So that
\begin{equation}
\nabla_{\Lambda} \mathcal{L} = -\frac{1}{2} \Lambda^{-1} \mathbb{E}_q[\nabla_{w}^2 L(w) - \rho \nabla_{w}^2 \log p(w)] \Lambda^{-1} + \frac{\rho}{2}\Lambda^{-1}
\end{equation}Thus, the gradients $\nabla_{\phi} \mathcal{L}(\phi) = \begin{bmatrix}\nabla_{\mu}\mathcal{L}\\ \text{vec}(\nabla_{\Lambda} \mathcal{L}) \end{bmatrix}$ are thus given by:
\begin{equation}
\begin{bmatrix}\mathbb{E}_q[\nabla_{w} L(w) - \rho \nabla_{w} p(w)] \\ -\frac{1}{2} \Lambda^{-1} \mathbb{E}_q[\nabla_{w}^2 L(w) - \rho \nabla_{w}^2 \log p(w)] \Lambda^{-1} + \frac{\rho}{2}\Lambda^{-1}  \end{bmatrix}
\end{equation}
The Fisher Information Matrix is given by \ref{eqn:FIM}
:
\begin{equation}
   F = \mathbb{E}_{q_{\phi}} \left[-\nabla_{\phi}^2 \log q_{\phi} \right]  = \begin{bmatrix} \Sigma^{-1} & 0 \\ 0 & \frac{1}{2} \Sigma \otimes \Sigma  \end{bmatrix}
\end{equation}Therefore
\begin{equation}
F^{-1} \nabla_{\phi} \mathcal{L}(\phi) =  \begin{bmatrix} \Lambda^{-1} & 0 \\ 0 & 2 \Lambda \otimes \Lambda \end{bmatrix} \begin{bmatrix}\nabla_{\mu}\mathcal{L}\\ \text{vec}(\nabla_{\Lambda} \mathcal{L}) \end{bmatrix} = \begin{bmatrix}  \Lambda^{-1} \nabla_{\mu}\mathcal{L} \\ \text{vec} (2 \Lambda \nabla_{\Lambda} \mathcal{L} \Lambda)  \end{bmatrix}
\end{equation}where we used the identities $(B^T  \otimes A) \text{vec}(X) = \text{vec}(AXB)$ and $(A  \otimes B) ^{-1} = A^{-1} \otimes B^{-1}$.  Since $ \text{vec} (2 \Lambda \nabla_{\Lambda} \mathcal{L} \Lambda) = \text{vec}(- \mathbb{E}_q[\nabla_{w}^2 L(w) - \rho \nabla_{w}^2 p(w)]  + \rho \Lambda) $, we have the required updates.
\end{proof}

\bibliography{tmlr}

\begin{thebibliography}{27}
\providecommand{\natexlab}[1]{#1}
\providecommand{\url}[1]{\texttt{#1}}
\expandafter\ifx\csname urlstyle\endcsname\relax
  \providecommand{\doi}[1]{doi: #1}\else
  \providecommand{\doi}{doi: \begingroup \urlstyle{rm}\Url}\fi

\bibitem[Ajalloeian \& Stich(2021)Ajalloeian and
  Stich]{ajalloeian2021convergence}
Ahmad Ajalloeian and Sebastian~U. Stich.
\newblock On the convergence of sgd with biased gradients, 2021.

\bibitem[Amari(1998)]{amari98natural}
S.~Amari.
\newblock Natural gradient works efficiently in learning.
\newblock \emph{Neural Computation}, 10\penalty0 (2):\penalty0 251--276, 1998.

\bibitem[Anderson(2003)]{anderson}
T.~W. Anderson.
\newblock \emph{An introduction to multivariate statistical analysis}.
\newblock Wiley Series in Probability and Statistics, 2003.

\bibitem[Bissiri et~al.(2016)Bissiri, Holmes, and Walker]{Bissiri_2016}
P.~G. Bissiri, C.~C. Holmes, and S.~G. Walker.
\newblock A general framework for updating belief distributions.
\newblock \emph{Journal of the Royal Statistical Society: Series B (Statistical
  Methodology)}, 78\penalty0 (5):\penalty0 1103--1130, feb 2016.
\newblock \doi{10.1111/rssb.12158}.
\newblock URL \url{https://doi.org/10.1111%2Frssb.12158}.

\bibitem[Bottou et~al.(2016)Bottou, Curtis, and Nocedal]{bottou_2018}
Léon Bottou, Frank~E. Curtis, and Jorge Nocedal.
\newblock Optimization methods for large-scale machine learning.
\newblock 2016.
\newblock \doi{10.48550/ARXIV.1606.04838}.
\newblock URL \url{https://arxiv.org/abs/1606.04838}.

\bibitem[Dangel et~al.(2020)Dangel, Kunstner, and Hennig]{dangel2020backpack}
Felix Dangel, Frederik Kunstner, and Philipp Hennig.
\newblock Backpack: Packing more into backprop, 2020.

\bibitem[Dinh et~al.(2017)Dinh, Pascanu, Bengio, and Bengio]{DinhPBB17}
Laurent Dinh, Razvan Pascanu, Samy Bengio, and Yoshua Bengio.
\newblock Sharp minima can generalize for deep nets.
\newblock \emph{CoRR}, abs/1703.04933, 2017.
\newblock URL \url{http://arxiv.org/abs/1703.04933}.

\bibitem[Duchi et~al.(2011)Duchi, Hazan, and Singer]{duchi_adagrad}
John Duchi, Elad Hazan, and Yoram Singer.
\newblock Adaptive subgradient methods for online learning and stochastic
  optimization.
\newblock \emph{Journal of Machine Learning Research}, 12\penalty0
  (61):\penalty0 2121--2159, 2011.
\newblock URL \url{http://jmlr.org/papers/v12/duchi11a.html}.

\bibitem[Dziugaite \& Roy(2017)Dziugaite and Roy]{Dziugaite2017}
Gintare~Karolina Dziugaite and Daniel~M. Roy.
\newblock Computing nonvacuous generalization bounds for deep (stochastic)
  neural networks with many more parameters than training data.
\newblock 2017.
\newblock \doi{10.48550/ARXIV.1703.11008}.
\newblock URL \url{https://arxiv.org/abs/1703.11008}.

\bibitem[Foret et~al.(2020)Foret, Kleiner, Mobahi, and Neyshabur]{foret20}
Pierre Foret, Ariel Kleiner, Hossein Mobahi, and Behnam Neyshabur.
\newblock Sharpness-aware minimization for efficiently improving
  generalization.
\newblock \emph{CoRR}, abs/2010.01412, 2020.
\newblock URL \url{https://arxiv.org/abs/2010.01412}.

\bibitem[Hinton \& van Camp(1993)Hinton and van Camp]{hinton_van_camp}
Geoffrey~E. Hinton and Drew van Camp.
\newblock Keeping the neural networks simple by minimizing the description
  length of the weights.
\newblock In \emph{Proceedings of the Sixth Annual Conference on Computational
  Learning Theory}, COLT '93, pp.\  5–13, New York, NY, USA, 1993.
  Association for Computing Machinery.
\newblock ISBN 0897916115.
\newblock \doi{10.1145/168304.168306}.
\newblock URL \url{https://doi.org/10.1145/168304.168306}.

\bibitem[Hochreiter \& Schmidhuber(1997)Hochreiter and
  Schmidhuber]{hochreiter_mdl}
Sepp Hochreiter and Jürgen Schmidhuber.
\newblock {Flat Minima}.
\newblock \emph{Neural Computation}, 9\penalty0 (1):\penalty0 1--42, 01 1997.
\newblock ISSN 0899-7667.
\newblock \doi{10.1162/neco.1997.9.1.1}.
\newblock URL \url{https://doi.org/10.1162/neco.1997.9.1.1}.

\bibitem[Huang et~al.(2019)Huang, Emam, Goldblum, Fowl, Terry, Huang, and
  Goldstein]{huang2020}
W.~Ronny Huang, Zeyad Emam, Micah Goldblum, Liam Fowl, Justin~K. Terry, Furong
  Huang, and Tom Goldstein.
\newblock Understanding generalization through visualizations.
\newblock \emph{CoRR}, abs/1906.03291, 2019.
\newblock URL \url{http://arxiv.org/abs/1906.03291}.

\bibitem[Jang et~al.(2016)Jang, Gu, and Poole]{gumbel_softmax_jang}
Eric Jang, Shixiang Gu, and Ben Poole.
\newblock Categorical reparameterization with {G}umbel-{S}oftmax, 2016.
\newblock URL \url{https://arxiv.org/abs/1611.01144}.

\bibitem[Khan \& Rue(2021)Khan and Rue]{khan_blr}
Mohammad~Emtiyaz Khan and Håvard Rue.
\newblock The {B}ayesian learning rule, 2021.
\newblock URL \url{https://arxiv.org/abs/2107.04562}.

\bibitem[Khan et~al.(2018)Khan, Nielsen, Tangkaratt, Lin, Gal, and
  Srivastava]{khan_fast_scalable}
Mohammad~Emtiyaz Khan, Didrik Nielsen, Voot Tangkaratt, Wu~Lin, Yarin Gal, and
  Akash Srivastava.
\newblock Fast and scalable {B}ayesian deep learning by weight-perturbation in
  adam.
\newblock 2018.
\newblock \doi{10.48550/ARXIV.1806.04854}.
\newblock URL \url{https://arxiv.org/abs/1806.04854}.

\bibitem[Kim et~al.(2022)Kim, Li, Hu, and Hospedales]{fisher_sam}
Minyoung Kim, Da~Li, Shell~Xu Hu, and Timothy~M. Hospedales.
\newblock Fisher {SAM}: {I}nformation geometry and sharpness aware
  minimisation.
\newblock 2022.
\newblock \doi{10.48550/ARXIV.2206.04920}.
\newblock URL \url{https://arxiv.org/abs/2206.04920}.

\bibitem[Kingma \& Ba(2014)Kingma and Ba]{kingma_adam}
Diederik~P. Kingma and Jimmy Ba.
\newblock Adam: A method for stochastic optimization, 2014.
\newblock URL \url{https://arxiv.org/abs/1412.6980}.

\bibitem[Kunstner et~al.(2019)Kunstner, Balles, and Hennig]{kunstner}
Frederik Kunstner, Lukas Balles, and Philipp Hennig.
\newblock Limitations of the empirical fisher approximation.
\newblock \emph{CoRR}, abs/1905.12558, 2019.
\newblock URL \url{http://arxiv.org/abs/1905.12558}.

\bibitem[Langford \& Caruana(2001)Langford and Caruana]{langford2001}
John Langford and Rich Caruana.
\newblock (not) bounding the true error.
\newblock In T.~Dietterich, S.~Becker, and Z.~Ghahramani (eds.), \emph{Advances
  in Neural Information Processing Systems}, volume~14. MIT Press, 2001.
\newblock URL
  \url{https://proceedings.neurips.cc/paper/2001/file/98c7242894844ecd6ec94af67ac8247d-Paper.pdf}.

\bibitem[Martens(2020)]{martens_jmlr}
James Martens.
\newblock New insights and perspectives on the natural gradient method.
\newblock \emph{Journal of Machine Learning Research}, 21\penalty0
  (146):\penalty0 1--76, 2020.
\newblock URL \url{http://jmlr.org/papers/v21/17-678.html}.

\bibitem[McAllester(1999)]{McAllester1999}
David~A. McAllester.
\newblock Some {PAC-Bayesian} theorems.
\newblock 1999.
\newblock URL \url{https://doi.org/10.1023/A:1007618624809}.

\bibitem[Sagun et~al.(2017)Sagun, Evci, G{\"{u}}ney, Dauphin, and
  Bottou]{SagunEGDB17}
Levent Sagun, Utku Evci, V.~Ugur G{\"{u}}ney, Yann~N. Dauphin, and L{\'{e}}on
  Bottou.
\newblock Empirical analysis of the {H}essian of over-parameterized neural
  networks.
\newblock \emph{CoRR}, abs/1706.04454, 2017.
\newblock URL \url{http://arxiv.org/abs/1706.04454}.

\bibitem[Wei \& Schwab(2020)Wei and Schwab]{Wei2020ImplicitRO}
Ming-Wei Wei and David Schwab.
\newblock Implicit regularization of {SGD} via thermophoresis.
\newblock In \emph{Advances in Neural Information Processing Systems, Machine
  Learning for Physical Sciences Workshop}, 2020.

\bibitem[Wilson \& Izmailov(2020)Wilson and Izmailov]{wilson2020}
Andrew~Gordon Wilson and Pavel Izmailov.
\newblock Bayesian deep learning and a probabilistic perspective of
  generalization.
\newblock \emph{CoRR}, abs/2002.08791, 2020.
\newblock URL \url{https://arxiv.org/abs/2002.08791}.

\bibitem[Yao et~al.(2020)Yao, Gholami, Shen, Keutzer, and Mahoney]{adahessian}
Zhewei Yao, Amir Gholami, Sheng Shen, Kurt Keutzer, and Michael~W. Mahoney.
\newblock {ADAHESSIAN:} an adaptive second order optimizer for machine
  learning.
\newblock \emph{CoRR}, abs/2006.00719, 2020.
\newblock URL \url{https://arxiv.org/abs/2006.00719}.

\bibitem[Zhang et~al.(2017)Zhang, Sun, Duvenaud, and Grosse]{zhang}
Guodong Zhang, Shengyang Sun, David Duvenaud, and Roger~B. Grosse.
\newblock Noisy natural gradient as variational inference.
\newblock \emph{CoRR}, abs/1712.02390, 2017.
\newblock URL \url{http://arxiv.org/abs/1712.02390}.

\end{thebibliography}
\bibliographystyle{tmlr}

\end{document}